\newcommand{\defemph}[1]{\emph{#1}}
\newcommand{\feddc}{\textsc{FedDC}\xspace}
\newcommand{\fedavg}{\textsc{FedAvg}\xspace}
\newcommand{\fedyogi}{\textsc{FedYogi}\xspace}
\newcommand{\fedadam}{\textsc{FedAdam}\xspace}
\newcommand{\fedadagrad}{\textsc{FedAdagrad}\xspace}
\newcommand{\fedprox}{\textsc{FedProx}\xspace}
\newcommand{\central}{\textsc{central}\xspace}
\newtheorem{assumption}[theorem]{Assumption}
\newtheorem*{proposition*}{Proposition}
\newtheorem*{lemma*}{Lemma}
\newtheorem*{corollary*}{Corollary}
\newcommand{\algo}{\mathcal{A}}
\newcommand{\risk}{\varepsilon}
\newcommand{\model}{h}
\newcommand{\modelspace}{\mathcal{H}}
\newcommand{\aggmodel}{\overline{\model}}
\newcommand{\loss}{\ell}
\newcommand{\samplesize}{n}
\newcommand{\radonpoint}{\mathfrak{r}}
\DeclareMathOperator{\agg}{agg}
\newcommand{\Exp}[2]{\mathop{{}\mathbb{E}_{#1}} \Big[ #2 \Big] }
\newcommand{\prob}[1]{\mathop{{}\mathbb{P}} \left(  #1 \right ) }
\newcommand{\Dcal}{\mathcal{D}}
\newcommand{\RR}{\mathbb{R}}
\newcommand{\NN}{\mathbb{N}}
\newcommand{\Xcal}{\mathcal{X}}
\newcommand{\Ycal}{\mathcal{Y}}
\newcommand{\ourmaintitle}{Federated Learning from Small Datasets}
\newcommand{\ourtitle}{\ourmaintitle}
\newcommand{\ourmethod}{\feddc}
\title{\ourtitle}
\author{%
  Michael Kamp \\
  Institute for AI in medicine (IKIM)\\
  University Hospital Essen, Essen Germany, and\\
  Ruhr-University Bochum, Bochum Germany, and\\
  Monash University, Melbourne, Australia\\
  \texttt{michael.kamp@uk-essen.de}\\
  \And
  Jonas Fischer\\
  Harvard T.H. Chan School of Public Health\\
  Department of Biostatistics\\
  Boston, MA, United States\\
  \texttt{jfischer@hsph.harvard.edu}\\
  \AND
  Jilles Vreeken\\
  CISPA Helmholtz Center for Information Security\\
  Saarbr\"ucken, Germany\\
  \texttt{vreeken@cispa.de}\\
}
\begin{document}
	
	\maketitle

	\begin{abstract}
		Federated learning allows multiple parties to collaboratively train a joint model without having to share any local data. It enables applications of machine learning in settings where data is inherently distributed and undisclosable, such as in the medical domain. Joint training is usually achieved by aggregating local models. When local datasets are small, locally trained models can vary greatly from a globally good model. Bad local models can arbitrarily deteriorate the aggregate model quality, causing federating learning to fail in these settings. 
We propose a novel approach that avoids this problem by interleaving model \emph{aggregation} and \emph{permutation} steps. During a permutation step we redistribute local models across clients through the server, while preserving data privacy, to allow each local model to train on a daisy chain of local datasets. This enables successful training in data-sparse domains. Combined with model aggregation, this approach enables effective learning even if the local datasets are extremely small, while retaining the privacy benefits of federated learning.
	\end{abstract}

	\section{Introduction}
How can we learn \emph{high quality} models when data is \emph{inherently distributed} across sites and cannot be shared or pooled?
In federated learning, the solution is to iteratively train models locally at each site and share these models with the server to be aggregated to a global model. %This global model can then again be refined by the local sites.
As only models are shared, data usually remains undisclosed.
This process, however, requires sufficient data to be available at each site in order for the locally trained models to achieve a minimum quality---even a single bad model can render aggregation arbitrarily bad~\citep{shamir2014distributed}.
%Federated learning takes on this problem and offers promising solutions, but requires sufficient data to be available at each site in order for the locally trained models to achieve a minimum quality -- even a single bad model can render averaging arbitrarily bad~\citep{shamir2014distributed}.
In many relevant applications this requirement is not met: In healthcare settings we often have as little as a few dozens of samples~\citep{granlund2020hyperpolarized, su2021comprehensive, painter2020angiosarcoma}. Also in domains where deep learning is generally regarded as highly successful, such as natural language processing and object detection, applications often suffer from a lack of data~\citep{liu2020survey, kang2019few}.

To tackle this problem, we propose a new building block called \textit{daisy-chaining} for federated learning in which models are trained on one local dataset after another, much like a daisy chain. In a nutshell, at each client a model is trained locally, sent to the server, and then---instead of aggregating local models---sent to a random other client as is (see Fig.~\ref{fig:overview}). This way, each local model is exposed to a daisy chain of clients and their local datasets.
This allows us to learn from small, distributed datasets simply by consecutively training the model with the data available at each site.
Daisy-chaining alone, however, violates privacy, since a client can infer from a model upon the data of the client it received it from~\citep{shokri2017membership}. Moreover, performing daisy-chaining naively would lead to overfitting which can cause learning to diverge~\citep{haddadpour2019convergence}.
In this paper, we propose to combine daisy-chaining of local datasets with aggregation of models, both orchestrated by the server, and term this method \emph{federated daisy-chaining} (\ourmethod).
%
% To tackle this problem, we propose a new building block for federated learning in which models are trained on a series of datasets without aggregation, much like a daisy chain. In a nutshell, at each client a model is trained locally, sent to the server, and then---instead of aggregating local models---sent to a random other client as is (see Fig.~\ref{fig:overview}). This way, each local model is exposed to a daisy chain of clients and their local datasets.
% % To tackle this problem, we propose a new building block for federated learning in which models traverse clients in the form of a daisy chain. In a nutshell, at each client a model is trained locally, sent to the server, and then---instead of aggregating local models---sent to a random other client as is (see Fig.~\ref{fig:overview}). This way, each local model is exposed to a daisy chain of clients and their local datasets. 
% This allows us to learn from small, distributed datasets simply by consecutively training the model with the data available at each site.
% We should not do this naively, however, since it would lead to overfitting---a common problem in federated learning which can cause learning to diverge~\citep{haddadpour2019convergence}. It further violates privacy, since a client can infer from a model upon the data of the client it received it from~\citep{shokri2017membership}.
% To alleviate these issues, we propose to combine daisy-chaining of local datasets with aggregation of models, both orchestrated by the server, and term this method \emph{federated daisy-chaining} (\ourmethod).

We show that our simple, yet effective approach maintains privacy of local datasets, while it provably converges and guarantees improvement of model quality in convex problems with a suitable aggregation method. 
Formally, we show convergence for \ourmethod on non-convex problems. We then show for convex problems that \ourmethod succeeds on small datasets where standard federated learning fails. For that, we analyze \ourmethod combined with aggregation via the Radon point from a PAC-learning perspective. We substantiate this theoretical analysis for convex problems by showing that \ourmethod in practice matches the accuracy of a model trained on the full data of the SUSY binary classification dataset with only $2$ samples per client, outperforming standard federated learning by a wide margin. 
For non-convex settings, we provide an extensive empirical evaluation, showing that \ourmethod outperforms naive daisy-chaining, vanilla federated learning \fedavg~\citep{mcmahan2017communication}, \fedprox~\citep{li2020federated}, \fedadagrad, \fedadam, and \fedyogi~\citep{reddi2020adaptive} on  low-sample CIFAR10~\citep{krizhevsky2009learning}, including non-iid settings, and, more importantly, on two real-world medical imaging datasets. Not only does \ourmethod provide a wide margin of improvement over existing federated methods% on these challenging small datasets
, but it comes close to the performance of a gold-standard (centralized) neural network of the same architecture trained on the pooled data.
To achieve that, it requires a small communication overhead compared to standard federated learning for the additional daisy-chaining rounds. 
As often found in healthcare, we consider a cross-SILO scenario where such small communication overhead is negligible.
%We consider a cross-SILO scenario, such as typical in healthcare, where communication is not a bottleneck and, hence, this communication overhead is negligible. 
Moreover we show that with equal communication, standard federated averaging still underperforms in our considered settings.

%The following is the list of contributions as a paragraph, which is a bit shorter. 
In summary, our contributions are (i) \ourmethod, a novel approach to federated learning from small datasets via a combination of model permutations across clients and aggregation, (ii) a formal proof of convergence for \ourmethod, (iii) a theoretical guarantee that \ourmethod improves models in terms of $\epsilon,\delta$-guarantees which standard federated learning can not, (iv) a discussion of the privacy aspects and mitigations suitable for \ourmethod, including an empirical evaluation of differentially private \ourmethod, and (v) an extensive set of experiments showing that \ourmethod substantially improves model quality on small datasets compared to standard federated learning approaches.
% I removed the Resnet part because it's overselling... can we call it successful with that performance? I don't think so. It's a large step in the right direction, but I would be careful to call it like this

% In summary, our contributions are
% \begin{itemize}
%     \item \ourmethod, a novel approach to federated learning from small datasets via a combination of model permutations across clients and aggregation,
%     \item a theoretical guarantee that \ourmethod improves models in terms of $\epsilon,\delta$-guarantees which standard federated learning can not,
%     \item a discussion of the privacy aspects and mitigations suitable for \ourmethod, including an empirical evaluation of differentially private \ourmethod, and
%     \item an extensive set of experiments showing that \ourmethod substantially improves model quality for small datasets, e.g., being able to train ResNet18 on a pneumonia dataset on as little as $8$ samples per client.
% \end{itemize}

\begin{figure*}
    \centering
    \includegraphics[width=1.0\textwidth]{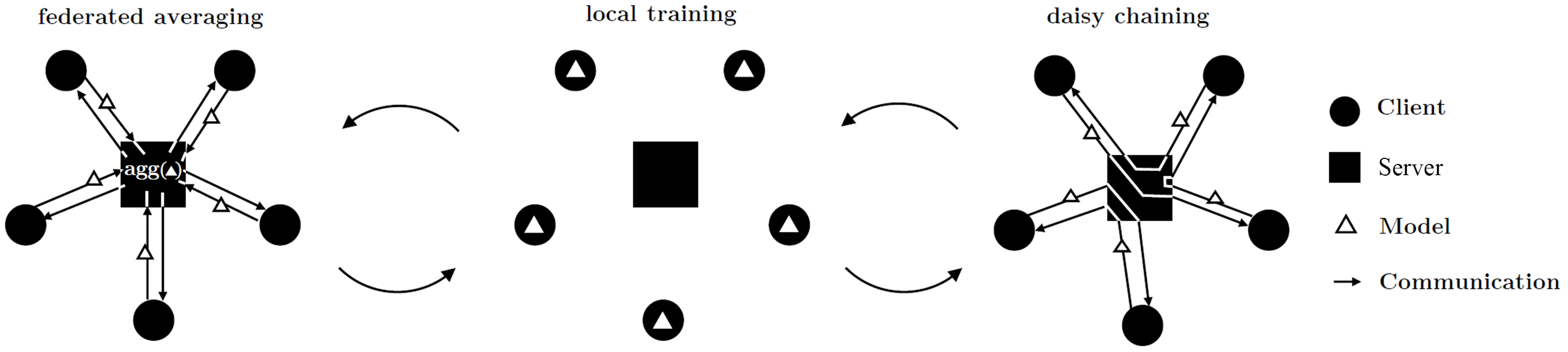}
    \caption{\textit{Federated learning settings.} A standard federated learning setting with training of local models at clients (middle) with aggregation phases where models are communicated to the server, aggregated, and sent back to each client (left).
    We propose to add daisy chaining (right), where local models are sent to the server and then redistributed to a random permutation of clients as is.}
    \label{fig:overview}
\end{figure*}
	\section{Related Work}
Learning from small datasets is a well studied problem in machine learning.
In the literature, we find both general solutions, such as using simpler models and transfer learning~\citep{torrey2010transfer}, and more specialized ones, such as data augmentation~\citep{ibrahim2021augmentation} and few-shot learning~\citep{vinyals2016matching, prabhu2019few}.
%Each of these approaches allow training a model in case only little data is available. 
In our scenario overall data is abundant, but the problem is that data is distributed into small local datasets at each site, which we are not allowed to pool. \citet{hao2021towards} propose local data augmentation for federated learning, but their method requires a sufficient quality of the local model for augmentation which is the opposite of the scenario we are considering.
\citet{huang2021fl} provide generalization bounds for federated averaging via the NTK-framework, but requires one-layer infinite-width NNs and infinitesimal learning rates.

Federated learning and its variants have been shown to learn from incomplete local data sources, e.g., non-iid label distributions~\citep{li2020federated, wang2019federated} and differing feature distributions~\citep{li2020fedbn, reisizadeh2020robust}, but fail in case of large gradient diversity~\citep{haddadpour2019convergence} and strongly dissimilar label distribution~\citep{marfoq2021federated}. For small datasets, local empirical distributions may vary greatly from the global distribution: the difference of empirical to true distribution decreases exponentially with the sample size (e.g., according to the Dvoretzky–Kiefer–Wolfowitz inequality), but for small samples the difference can be substantial, in particular if the distribution differs from a Normal distribution~\citep{kwak2017central}. \citet{shamir2014distributed} have shown the adverse effect of bad local models on averaging, proving that even due to a single bad model averaging can be arbitrarily bad.

A different approach to dealing with biased local data is by learning personalized models at each client. Such 
personalized FL~\citep{li2021model} can reduce sample complexity, e.g., by using shared representations~\citep{pmlr-v139-collins21a} for client-specific models, e.g., in the medical domain~\citep{yang2021flop}, or by training sample-efficient personalized Bayesian methods~\citep{achituve2021personalized}. It is not applicable, however, to settings where you are not allowed to learn the biases or batch effects of local clients, e.g., in many medical applications where this would expose sensitive client information.
\citet{kiss2021migrating} propose a decentralized and communication-efficient variant of federated learning that migrates models over a decentralized network, storing incoming models locally at each client until sufficiently many models are collected on each client for an averaging step, similar to Gossip federated learing~\citep{jelasity2005gossip}. The variant without averaging is similar to simple daisy-chaining which we compare to in Section~\ref{sec:experiments}.
\ourmethod is compatible with any aggregation operator, including the Radon machine~\citep{kamp2017effective}, the geometric median~\citep{pillutla2022robust}, or neuron-clustering~\citep{yurochkin2019bayesian}, and can be straightforwardly combined with approaches to improve communication-efficiency, such as dynamic averaging~\citep{kamp2018efficient}, and model quantization~\citep{reisizadeh2020fedpaq}. We combine \ourmethod with averaging, the Radon machine, and FedProx~\citep{li2020federated} in Sec.~\ref{sec:experiments}.

% We provide empirical results for \ourmethod in combination with standard averaging, the Radon machine, and FedProx~\citep{li2020federated} and compare against
% classical federated averaging (\fedavg)~\citep{mcmahan2017communication},
%  \fedprox~\citep{li2020federated}, \fedadagrad, \fedyogi, and \fedadam~\citep{reddi2020adaptive} in an extensive study.
	\section{Preliminaries}
\label{sec:prelim}
We assume iterative learning algorithms~\citep[cf. Chp. 2.1.4][]{kamp2019black} $\algo:\Xcal\times\Ycal\times\modelspace\rightarrow\modelspace$ that update a model $\model\in\modelspace$ using a dataset $D\subset\Xcal\times\Ycal$ from an input space $\Xcal$ and output space $\Ycal$, i.e., $\model_{t+1}=\algo(D,\model_t)$. 
Given a set of $m\in\NN$ clients with local datasets $D^1,\dots,D^m\subset\Xcal\times\Ycal$ drawn iid from a data distribution $\Dcal$ 
%with probability density function $\prob{x,y}=\prob{y|x}\prob{x}$ 
and a loss function $\loss:\Ycal\times\Ycal\rightarrow\RR$, the goal is to find a single model $\model^*\in\modelspace$ that minimizes the risk $\risk(\model)=\mathbb{E}_{(x,y)\sim\Dcal}[\loss(\model(x),y)]$.
% \begin{equation}
%     \risk(\model)=\Exp{(x,y)\sim\Dcal}{\loss(\model(x),y)}\enspace .
%     \label{eq:risk}
% \end{equation} 
In \defemph{centralized learning}, datasets are pooled as $D=\bigcup_{i\in [m]}D^i$ and $\algo$ is applied to $D$ until convergence. Note that applying $\algo$ on $D$ can be the application to any random subset, e.g., as in mini-batch training, and convergence is measured in terms of low training loss, small gradient, or small deviation from previous iterate. In standard \defemph{federated learning}~\citep{mcmahan2017communication}, $\algo$ is applied in parallel for $b\in\NN$ rounds on each client locally to produce local models $\model^1,\dots,\model^m$. These models are then centralized and aggregated using an aggregation operator $\agg:\modelspace^m\rightarrow\modelspace$, i.e., $\aggmodel = \agg(\model^1,\dots,\model^m)$. The aggregated model $\aggmodel$ is then redistributed to local clients which perform another $b$ rounds of training using $\aggmodel$ as a starting point. This is iterated until convergence of $\aggmodel$. When aggregating by averaging, this method is known as federated averaging (\fedavg). Next, we describe \ourmethod.
	\section{Federated Daisy-Chaining}
\label{sec:algo}
We propose federated daisy chaining as an extension to federated learning in a setup with $m$ clients and one designated sever.\!\footnote{This star-topology can be extended to hierarchical networks in a straightforward manner. Federated learning can also be performed in a decentralized network via gossip algorithms~\citep{jelasity2005gossip}.}
We provide the pseudocode of our approach as Algorithm~\ref{alg:feddc}.

\textbf{The client: } Each client trains its local model in each round on local data (line 4), and sends its model to the server every $b$ rounds for aggregation, where $b$ is the aggregation period, and every $d$ rounds for daisy chaining, where $d$ is the daisy-chaining period (line 6). This re-distribution of models results in each individual model conceptually following a daisy chain of clients, training on each local dataset. Such a daisy chain is interrupted by each aggregation round. 
%Algorithm~\ref{alg:feddc} details the procedure. Each local client $i$ trains its local model in each round on local data (line 4). In a daisy-chaining, or aggregation round, the client sends its local model to the coordinator (line 6).

\textbf{The server: }
Upon receiving models, in a daisy-chaining round (line 9) the server draws a random permutation $\pi$ of clients (line 10) and re-distributes the model of client $i$ to client $\pi(i)$ (line 11), while in an aggregation round (line 12), the server instead aggregates all local models and re-distributes the aggregate to all clients (line 13-14). 

\SetKwFor{local}{Locally}{do}{}
\SetKwFor{coord}{At server}{do}{}
\begin{algorithm}[t]
    \caption{Federated Daisy-Chaining \ourmethod}
    \label{alg:feddc}
    \KwIn{daisy-chaining period $d$, aggregation period $b$, learning algorithm $\algo$, aggregation operator $\agg$, $m$ clients with local datasets $D^1,\dots,D^m$, total number of rounds $T$}
    \KwOut{final model aggregate $\overline{h}_T$}
    initialize local models $h_0^1,\dots,h_0^m$\\
    \local{at client $i$ at time $t$}{
        sample $S$ from $D^i$\\
        $h_{t}^i\leftarrow\algo(S,h^i_{t-1})$\\
        \If{$t ~mod~ d = d-1$ or $t ~mod~ b = b-1$}{
             send $h_{t}^i$ to server\\
             receive new $h_{t}^i$ from server \tcp*[f]{receives either aggregate $\overline{h}_t$ or some $h_t^j$}\\
        }
    }
    \coord{at time $t$}{
        \If(\tcp*[f]{daisy chaining}){$t ~mod~ d = d-1$}{
            draw permutation $\pi$ of [1,m] at random\\
                for all $i\in [m]$ send model $h_t^i$ to client $\pi(i)$\\
        }
        \ElseIf(\tcp*[f]{aggregation}){$t ~mod~ b = b-1$}{
            $\overline{h}_t\leftarrow\agg(h_t^1,\dots,h_t^m)$\\
            send $\overline{h}_t$ to all clients 
        }
    }
\end{algorithm}

\textbf{Communication complexity: } 
Note that we consider cross-SILO settings, such as healthcare, were communication is not a bottleneck and, hence, restrict ourselves to a brief discussion in the interest of space.
Communication between clients and server happens in $O(\frac{T}{d} + \frac{T}{b})$ many rounds, where $T$ is the overall number of rounds. Since \ourmethod communicates every $d$th and $b$th round, 
% In other words, \ourmethod communicates in every $d$th and $b$th round out of the $T$ overall rounds.
% Therefore, 
the amount of communication rounds is similar to \fedavg with averaging period $b_{\mathit{FedAvg}}=\min\{d,b\}$. That is, \ourmethod increases communication over \fedavg by a constant factor depending on the setting of $b$ and $d$.
The amount of communication per communication round is linear in the number of clients and model size, similar to federated averaging.
% In each communication round of \ourmethod, each local client sends its parameters securely to the coordinator and receives a set of parameters, regardless of the type of communication round; just the internal operation of the coordinator is different, which does not affect communication.
We investigate the performance of \fedavg provided with the same communication capacity as \feddc in our experiments and in App.~\ref{app:commeff}.

	\section{Theoretical guarantees}
\label{sec:theory}
In this section, we formally show that \ourmethod converges for averaging. We, further, provide theoretical bounds on the model quality in convex settings, showing that \ourmethod has favorable generalization error in low sample settings compared to standard federated learning. 
%On a real world dataset we show that this large generalization gap also exists in practice for standard federated learning, whereas \ourmethod yields high quality models with small generalization gap.
More formally, we first show that under standard assumptions on the empirical risk, it follows from a result of~\citet{yu2019parallel} that \ourmethod converges when using averaging as aggregation and SGD for learning---a standard setting in, e.g., federated learning of neural networks. We provide all proofs in the appendix.

\begin{corollary}
Let the empirical risks $\mathcal{E}_{emp}^i(h)=\sum_{(x,y)\in D^i}\ell(h_i(x),y)$ at each client $i\in [m]$ be $L$-smooth with $\sigma^2$-bounded gradient variance and $G^2$-bounded second moments, then \ourmethod with averaging and SGD 
%as learning algorithm 
has a convergence rate of $\mathcal{O}(1/\sqrt{m T})$, where $T$ is the number of local updates.
\label{cor:convergence}
\end{corollary}

Since model quality in terms of generalization error does not necessarily depend on convergence of training~\citep{haddadpour2019convergence, kamp2018efficient}, we additionally analyze model quality in terms of probabilistic worst-case guarantees on the generalization error~\citep{shalev2014understanding}. %
The average of local models can yield as bad a generalization error as the worst local model, hence, using averaging as aggregation scheme in standard federated learning can yield arbitrarily bad results~\citep[cf.][]{shamir2014distributed}.
As the probability of bad local models starkly increases with smaller sample sizes, this trivial bound often carries over to our considered practical settings.
The Radon machine~\citep{kamp2017effective} is a federated learning approach that overcomes these issues for a wide range of learning algorithms and allows us to analyze (non-trivial) quality bounds of aggregated models under the assumption of convexity. Next, we show that \ourmethod can improve model quality for small local datasets where standard federated learning fails to do so.
% %
% Since model quality in terms of generalization error not necessarily depends on convergence of training~\cite{haddadpour2019convergence, kamp2018efficient}, model quality is usually assessed in terms of probabilistic worst-case guarantees on the generalization error~\citep{shalev2014understanding}. 
% We show that such PAC-like $(\epsilon,\delta)$-guarantees on the generalization error do not improve for federated learning when local datasets are small, but do improve for \ourmethod.
% %
% The average of local models can yield as bad a generalization error as the worst local model, and, hence, using averaging as aggregation scheme in standard federated learning can yield arbitrarily bad results~\citep[cf.][]{shamir2014distributed}.
% As the probability of bad local models starkly increases with smaller sample sizes, this trivial bound, hence, often carries over to practical settings.
% %
% The Radon machine~\citep{kamp2017effective} is a federated learning approach that overcomes these issues and allows us to analyze (non-trivial) quality bounds of aggregated models under the assumption of convexity.
% We show that for small local datasets federated learning with Radon points does not improve model quality in terms of these $(\epsilon,\delta)$-guarantees. We then extend results for the Radon machine to \ourmethod and show that sufficient daisy-chaining rounds overcome this limitation. 
%
\begin{figure*}[t]
 \centering
 \begin{subfigure}[t]{0.34\textwidth}
 \includegraphics{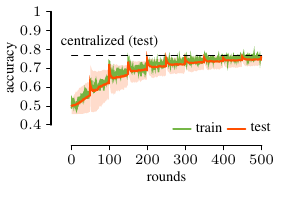}
%   \ifpdf
%   \tikzsetnextfilename{radonfeddc}
%   \begin{tikzpicture}
%     \begin{axis}[
%      eda line,
%      xlabel = {rounds}, 
%      ylabel= {accuracy},
%      width = 4.8cm,
%      height = 3.5cm,
%      no markers,
%      xtick = {0, 100, ..., 500},
%      xticklabel style={rotate=0},
% 	 cycle list name = {mamba},
%      ytick={0.4,0.5,...,1.0},
%      xmin=0, xmax=500,
%      ymin=0.4, ymax=1.0,
%      x label style = {at={(axis description cs:0.5,-0.35)}, anchor=north, font=\scriptsize},
%      y label style = {at={(axis description cs:-0.25,0.5)}, anchor=south, font=\scriptsize},
% %     legend pos = south east
% legend columns=2, legend style={nodes={scale=1.0, transform shape}, at={(0.80,0.08)}, anchor=north, draw=none}
%     ]
% 	%\errorband[green(ryb), opacity=0.2] {expres/radon/feddc_trainaccsconf.txt}{0}{2}{3}
% 	\errorband[internationalorange, opacity=0.2] {expres/radon/feddc_testaccsconf.txt}{0}{2}{3}
%  	\addplot+[green(ryb), line width=0.8pt, opacity=0.9] table[x index = 0, y index = 1, header = true] {expres/radon/feddc_trainaccsconf.txt};
%  	\addplot+[internationalorange, line width=1.0pt] table[x index = 0, y index = 1, header = true] {expres/radon/feddc_testaccsconf.txt};
%  	\addlegendentry{train}
% 	\addlegendentry{test}
% 	\draw [dashed, color=black] (axis cs:0,0.77) -- (axis cs:500,0.77) node[pos=0.2, above] {\scriptsize centralized (test)};
%   \end{axis}
%   \end{tikzpicture}%
%   \fi
  \caption{\ourmethod with Radon point with $d=1$, $b=50$.}
\label{fig:radonFedDC}
\end{subfigure}
\hfill
\begin{subfigure}[t]{0.32\textwidth}
\includegraphics{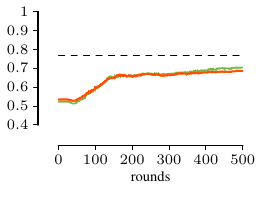}
%  \ifpdf
%   \tikzsetnextfilename{radononly1}
%   \begin{tikzpicture}
%     \begin{axis}[
%      eda line,
%      xlabel = {rounds}, 
%      %ylabel= {accuracis},
%      width = 4.7cm,
%      height = 3.5cm,
%      no markers,
%      xtick = {0, 100, ..., 500},
%      xticklabel style={rotate=0},
% 	 cycle list name = {mamba},
%      ytick={0.4,0.5,...,1.0},
%      xmin=0, xmax=500,
%      ymin=0.4, ymax=1.0,
%      x label style = {at={(axis description cs:0.5,-0.35)}, anchor=north, font=\scriptsize},
%      y label style = {at={(axis description cs:-0.0,0.5)}, anchor=south, font=\scriptsize},
% %     legend pos = south east
% %legend columns=2, legend style={nodes={scale=1.0, transform shape}, at={(0.80,0.08)}, anchor=north}
%     ]
% 	%\errorband[green(ryb), opacity=0.2] {expres/radon/radononly_trainaccsconf.txt}{0}{2}{3}
% 	\errorband[internationalorange, opacity=0.2] {expres/radon/radononly1_testaccsconf.txt}{0}{2}{3}
% 	%\errorband[internationalorange, opacity=0.2] {expres/radon/radononly1_trainaccsconf.txt}{0}{2}{3}
%  	\addplot+[green(ryb), line width=0.8pt, opacity=0.9] table[x index = 0, y index = 1, header = true] {expres/radon/radononly1_trainaccsconf.txt};
%  	\addplot+[internationalorange, line width=1.0pt] table[x index = 0, y index = 1, header = true] {expres/radon/radononly1_testaccsconf.txt};
%  	%\addlegendentry{train}
% 	%\addlegendentry{test}
% 	\draw [dashed, color=black] (axis cs:0,0.77) -- (axis cs:500,0.77);
%   \end{axis}
%   \end{tikzpicture}%
%   \fi
  \caption{Federated learning with Radon point with $b=1$.}
\label{fig:radonOnly1}
\end{subfigure}
\hfill
\begin{subfigure}[t]{0.32\textwidth}
\includegraphics{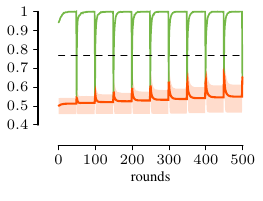}
%  \ifpdf
%   \tikzsetnextfilename{radononly}
%   \begin{tikzpicture}
%     \begin{axis}[
%      eda line,
%      xlabel = {rounds}, 
%      %ylabel= {accuracis},
%      width = 4.7cm,
%      height = 3.5cm,
%      no markers,
%      xtick = {0, 100, ..., 500},
%      xticklabel style={rotate=0},
% 	 cycle list name = {mamba},
%      ytick={0.4,0.5,...,1.0},
%      xmin=0, xmax=500,
%      ymin=0.4, ymax=1.0,
%      x label style = {at={(axis description cs:0.5,-0.35)}, anchor=north, font=\scriptsize},
%      y label style = {at={(axis description cs:-0.0,0.5)}, anchor=south, font=\scriptsize},
% %     legend pos = south east
% %legend columns=2, legend style={nodes={scale=1.0, transform shape}, at={(0.80,0.08)}, anchor=north}
%     ]
% 	%\errorband[green(ryb), opacity=0.2] {expres/radon/radononly_trainaccsconf.txt}{0}{2}{3}
% 	\errorband[internationalorange, opacity=0.2] {expres/radon/radononly_testaccsconf.txt}{0}{2}{3}
% 	%\errorband[internationalorange, opacity=0.2] {expres/radon/radononly1_trainaccsconf.txt}{0}{2}{3}
%  	\addplot+[green(ryb), line width=0.8pt, opacity=0.9] table[x index = 0, y index = 1, header = true] {expres/radon/radononly_trainaccsconf.txt};
%  	\addplot+[internationalorange, line width=1.0pt] table[x index = 0, y index = 1, header = true] {expres/radon/radononly_testaccsconf.txt};
%  	%\addlegendentry{train}
% 	%\addlegendentry{test}
% 	\draw [dashed, color=black] (axis cs:0,0.77) -- (axis cs:500,0.77);
%   \end{axis}
%   \end{tikzpicture}%
%   \fi
  \caption{Federated learning with Radon point with $b=50$.}
\label{fig:radonOnly}
\end{subfigure}
% \hfill
% \begin{subfigure}[t]{0.32\textwidth}
%  \ifpdf
%   \tikzsetnextfilename{avgonly}
%   \begin{tikzpicture}
%     \begin{axis}[
%      eda line,
%      xlabel = {rounds}, 
%      %ylabel= {accuracis},
%      width = 4.7cm,
%      height = 3.5cm,
%      no markers,
%      xtick = {0, 100, ..., 500},
%      xticklabel style={rotate=0},
% 	 cycle list name = {mamba},
%      ytick={0.4,0.5,...,1.0},
%      xmin=0, xmax=500,
%      ymin=0.4, ymax=1.0,
%      x label style = {at={(axis description cs:0.5,-0.35)}, anchor=north, font=\scriptsize},
%      y label style = {at={(axis description cs:-0.0,0.5)}, anchor=south, font=\scriptsize},
% %     legend pos = south east
% %legend columns=2, legend style={nodes={scale=1.0, transform shape}, at={(0.80,0.08)}, anchor=north}
%     ]
% 	%\errorband[green(ryb), opacity=0.2] {expres/radon/radononly_trainaccsconf.txt}{0}{2}{3}
% 	\errorband[internationalorange, opacity=0.2] {expres/radon/avg50_testaccsconf.txt}{0}{2}{3}
% 	%\errorband[internationalorange, opacity=0.2] {expres/radon/avg1_trainaccsconf.txt}{0}{2}{3}
%  	\addplot+[green(ryb), line width=0.8pt, opacity=0.9] table[x index = 0, y index = 1, header = true] {expres/radon/avg50_trainaccsconf.txt};
%  	\addplot+[internationalorange, line width=1.0pt] table[x index = 0, y index = 1, header = true] {expres/radon/avg50_testaccsconf.txt};
%  	%\addlegendentry{train}
% 	%\addlegendentry{test}
% 	\draw [dashed, color=black] (axis cs:0,0.77) -- (axis cs:500,0.77);
%   \end{axis}
%   \end{tikzpicture}%
%   \fi
%   \caption{Federated learning with averaging with $b=50$.}
% \label{fig:avgOnly}
% \end{subfigure}
\caption{\textit{Results on SUSY.} We visualize results in terms of train (green) and test error (orange) for (a) \ourmethod$(d=1,b=50)$ and standard federated learning using Radon points for aggregation with (b) $b=1$, i.e., the same amount of communication as \ourmethod, and (c) $b=50$, i.e., the same aggregation period as \ourmethod. The network has 441 clients with 2 data points per client. The performance of a central model trained on all data is indicated by the dashed line.
}
\label{fig:fedDCvsRadonOnly}
\end{figure*}
\tikzexternalenable

A Radon point~\citep{radon1921mengen} of a set of points $S$ from a space $\mathcal{X}$ is---similar to the geometric median---a point in the convex hull of $S$ with a high centrality (i.e., a Tukey depth~\citep{tukey1975mathematics, gilad2004bayes} of at least $2$). For a Radon point to exist, $S\subset\mathcal{X}$ has to have a minimum size $r\in\NN$ called the Radon number of $\mathcal{X}$. For $\mathcal{X}\subseteq \RR^d$ the radon number is $d+2$.
%
%the size of $S$ has to be sufficiently large; the minimum size of $S\subset\mathcal{X}$ is denoted the %Radon number of the space $\mathcal{X}$ and for $\mathcal{X}\subseteq \RR^d$ the radon number is $d+2$.
%
Here, the set of points $S$ are the local models, or more precisely their parameter vectors.
We make the following standard assumption~\citep{von2011statistical} on the local learning algorithm $\algo$.
\begin{assumption}[$(\epsilon,\delta)$-guarantees]
\label{assump:epsdelta}
The learning algorithm $\algo$ applied on a dataset drawn iid from $\Dcal$ of size $\samplesize\geq\samplesize_0\in\NN$ produces a model $\model\in\modelspace$ s.t. with probability $\delta\in (0,1]$ it holds for $\epsilon>0$ that 
$\prob{\risk(\model)>\epsilon}<\delta$.
The sample size $\samplesize_0$ is monotonically decreasing in $\delta$ and $\epsilon$ (note that typically $\samplesize_0$ is a polynomial in $\epsilon^{-1}$ and $\log(\delta^{-1})$).
\end{assumption}
Here $\risk(\model)$ is the risk defined in Sec.~\ref{sec:prelim}. 
Now let $r\in\NN$ be the Radon number of $\modelspace$, $\algo$ be a learning algorithm as in assumption~\ref{assump:epsdelta}, and risk $\risk$ be convex. Assume $m\geq r^h$ many clients with $h\in\NN$. For $\epsilon>0, \delta\in (0,1]$ assume local datasets $D_1,\dots,D_m$ of size larger than $\samplesize_0(\epsilon,\delta)$ drawn iid from $\Dcal$, and $\model_1,\dots,\model_m$ be local models trained on them using $\algo$. Let $\radonpoint_h$ be the iterated Radon point~\citep{clarkson1996approximating} with $h$ iterations computed on the local models (for details, see App.~\ref{app:radonmachine}). Then it follows from Theorem 3 in~\citet{kamp2017effective} that for all $i\in [m]$ it holds that
\begin{equation}
    \prob{\risk(\radonpoint_h)>\epsilon}\leq\left(r\prob{\risk(\model_i)>\epsilon}\right)^{2^h}
    \label{eq:radonPointImprovement}
\end{equation}
where the probability is over the random draws of local datasets. 
That is, the probability that the aggregate $\radonpoint_h$ is bad is doubly-exponentially smaller than the probability that a local model is bad. Note that in PAC-learning, the error bound and the probability of the bound to hold are typically linked, so that improving one can be translated to improving the other~\citep{von2011statistical}. 
Eq.~\ref{eq:radonPointImprovement} implies that the iterated Radon point only improves the guarantee on the confidence compared to that for local models if $\delta < r^{-1}$, i.e. $\prob{\risk(\radonpoint_h)>\epsilon}\leq\left(r\prob{\risk(\model_i)>\epsilon}\right)^{2^h} < \left(r\delta\right)^{2^h}<1$ only holds for $r\delta < 1$. Consequently, local models need to achieve a minimum quality for the federated learning system to improve model quality.
\begin{corollary}
Let $\modelspace$ be a model space with Radon number $r\in\NN$,  $\risk$ a convex risk, and $\algo$ a learning algorithm with sample size $\samplesize_0(\epsilon,\delta)$. Given $\epsilon>0$ and any $h\in\NN$, if local datasets $D_1,\dots,D_m$ with $m\geq r^h$ are smaller than $\samplesize_0(\epsilon,r^{-1})$, then federated learning using the Radon point does not improve model quality in terms of $(\epsilon,\delta)$-guarantees.
\label{lm:fedlearnArbitrarilyBad}
\end{corollary}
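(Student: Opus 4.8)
The plan is to read the claim off directly from the improvement bound of Equation~\ref{eq:radonPointImprovement} together with Assumption~\ref{assump:epsdelta}: the corollary is essentially the contrapositive of the threshold ``$\delta<r^{-1}$'' noted just above it, and the only real work is turning a bound on the failure probability into a statement about $(\epsilon,\delta)$-guarantees.

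First I would fix an arbitrary $h\in\NN$ and note that, since $m\geq r^h$, the iterated Radon point $\radonpoint_h$ with $h$ iterations is well defined on $\model_1,\dots,\model_m$, so Equation~\ref{eq:radonPointImprovement} applies. Writing $p=\prob{\risk(\model_i)>\epsilon}$ for the (common, by the iid assumption) failure probability of a local model, it gives $\prob{\risk(\radonpoint_h)>\epsilon}\leq (rp)^{2^h}$. For this estimate to certify \emph{any} improvement of $\radonpoint_h$ over a single $\model_i$ it must at least be non-vacuous, i.e.\ $(rp)^{2^h}\leq 1$, which forces $rp\leq 1$; a strict improvement needs $rp<1$. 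Hence the Radon-point aggregation can be shown to help only when the local models already satisfy $\prob{\risk(\model_i)>\epsilon}<r^{-1}$.

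Next I would argue that this local quality is unavailable from the datasets at hand. By Assumption~\ref{assump:epsdelta}, the only $\delta$ for which one may assert $\prob{\risk(\model_i)>\epsilon}<\delta$ are those with $\samplesize_0(\epsilon,\delta)\leq |D_i|$. Since, for fixed $\epsilon$, shrinking $\delta$ only raises the required sample size $\samplesize_0(\epsilon,\delta)$ (typically a polynomial in $\log\delta^{-1}$), and since by hypothesis $|D_i|<\samplesize_0(\epsilon,r^{-1})$, every admissible $\delta$ satisfies $\delta\geq r^{-1}$, hence $r\delta\geq 1$. Plugging the best bound the assumption licenses, $\prob{\risk(\model_i)>\epsilon}<\delta$, into Equation~\ref{eq:radonPointImprovement} yields only $\prob{\risk(\radonpoint_h)>\epsilon}\leq (r\delta)^{2^h}$, and $(r\delta)^{2^h}\geq 1$, so this bound is vacuous. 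Therefore no $(\epsilon,\delta')$-guarantee with $\delta'<\delta$ (indeed none with $\delta'<1$) follows for $\radonpoint_h$, and in particular federated learning via the iterated Radon point is not certified to improve on the local models, which is the claim.

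The one thing to be careful about is the logical status of the statement: Assumption~\ref{assump:epsdelta} is a one-sided implication (enough samples $\Rightarrow$ good guarantee), so the corollary must be read as ``no improvement can be \emph{guaranteed}'' --- the estimate of Equation~\ref{eq:radonPointImprovement}, which is all the framework provides for the Radon point, becomes vacuous; it is not a claim that on every concrete instance the aggregate is actually worse. A minor, cosmetic issue is the boundary $\samplesize_0(\epsilon,\delta)=\samplesize_0(\epsilon,r^{-1})$, which is handled either by taking the monotonicity of $\samplesize_0$ to be strict, or by noting that even at $\delta=r^{-1}$ one gets only $(r\delta)^{2^h}=1$, again no improvement.
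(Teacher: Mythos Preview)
Your argument is correct and is exactly the reasoning the paper intends: the corollary is stated without a separate proof in the paper, as an immediate consequence of the sentence preceding it (Equation~\ref{eq:radonPointImprovement} yields an improvement only when $\delta<r^{-1}$, and by the sample-size monotonicity in Assumption~\ref{assump:epsdelta} local datasets of size below $\samplesize_0(\epsilon,r^{-1})$ cannot certify any $\delta<r^{-1}$). Your write-up simply spells this out, and your closing remark on the one-sided, ``no guarantee of improvement'' reading of the statement is a helpful clarification that the paper leaves implicit.

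One small caution: you invoke ``shrinking $\delta$ only raises $\samplesize_0(\epsilon,\delta)$'', which is the intended monotonicity (and is what the parenthetical ``polynomial in $\log(\delta^{-1})$'' in Assumption~\ref{assump:epsdelta} says), but the literal wording of the assumption in the paper has the direction reversed. This is a typo in the paper, not a flaw in your argument; just be aware of it if you quote the assumption verbatim.
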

In other words, when using aggregation by Radon points alone, an improvement in terms of $(\epsilon,\delta)$-guarantees is strongly dependent on large enough local datasets.
Furthermore, given $\delta>r^{-1}$, the guarantee can become arbitrarily bad by increasing the number of aggregation rounds. 

Federated Daisy-Chaining as given in Alg.~\ref{alg:feddc} permutes local models at random, which is in theory equivalent to permuting local datasets.
%This way, the amount of data visible to each model is increased.
Since the permutation is drawn at random, the amount of permutation rounds $T$ necessary for each model to observe a  minimum number of distinct datasets $k$ with probability $1-\rho$ can be given with high probability via a variation of the coupon collector problem as 
$
T\geq d\frac{m}{\rho^\frac1m}(H_m-H_{m-k})%\enspace .
$, 
where $H_m$ is the $m$-th harmonic number---see Lm.~\ref{lm:minNumberRoundsDC} in App.~\ref{app:prooffeddcworks} for details.
It follows that when we perform daisy-chaining with $m$ clients and local datasets of size $n$ for at least $ dm\rho^{-\frac{1}{m}}(H_m-H_{m-k})$ rounds, then each local model will with probability at least $1-\rho$ be trained on at least $kn$ distinct samples. For an $\epsilon,\delta$-guarantee, we thus need to set $b$ large enough so that $kn\geq n_0(\epsilon,\sqrt{\delta})$ with probability at least $1-\sqrt{\delta}$. This way, the failure probability is the product of not all clients observing $k$ distinct datasets and the model having a risk larger than $\epsilon$, which is $\sqrt{\delta}\sqrt{\delta}=\delta$.
\begin{proposition}
Let $\modelspace$ be a model space with Radon number $r\in\NN$, $\risk$ a convex risk , and $\algo$ a learning algorithm with sample size $\samplesize_0(\epsilon,\delta)$. Given $\epsilon>0$, $\delta\in (0,r^{-1})$ and any $h\in\NN$, and local datasets $D_1,\dots,D_m$ of size $n\in\NN$ with $m\geq r^h$, then Alg.~\ref{alg:feddc} using the Radon point with aggr. period
\begin{equation}
b\geq  d\frac{m}{\delta^\frac{1}{2m}}\left(H_m-H_{m-\left\lceil n^{-1}\samplesize_0\left(\epsilon,\sqrt{\delta}\right)\right\rceil}\right)
\label{eq:aggToDaisychainRounds}
\end{equation}
improves model quality in terms of $(\epsilon,\delta)$-guarantees.
\label{prop:fedDCworks}
\end{proposition}
% \begin{proof}
% The number of daisy-chaining rounds before computing a Radon point ensure that with probability $1-\sqrt{\delta}$ all local models are trained on at least $kn$ samples with $k=\lceil\samplesize_0(\epsilon,\sqrt{\delta})/n\rceil$, i.e., each model is trained on at least $\samplesize_0(\epsilon,\sqrt{\delta})$ samples and thus an $(\epsilon,\sqrt{\delta})$-guarantee holds for each model with probability $1-\sqrt{\delta}$. It follows from Eq.~\ref{eq:radonPointImprovement} that the probability that the risk is higher than $\epsilon$ is 
% \[
% \prob{\risk(\radonpoint_h)>\epsilon}<\left(r\sqrt{\delta}\sqrt{\delta}\right)^{2^h}=\left(r\delta\right)^{2^h}\enspace .
% \]
% The result follows from $\delta<r^{-1}$ and Eq.~\eqref{eq:radonPointImprovement}.
% \end{proof}
%
%Prop.~\ref{prop:fedDCworks}
This result implies that if enough daisy-chaining rounds are performed in-between aggregation rounds, % (given by Eq.~\ref{eq:aggToDaisychainRounds})
%, then 
federated learning via the iterated Radon point improves model quality in terms of $(\epsilon,\delta)$-guarantees: the resulting model has generalization error smaller than $\epsilon$ with probability at least $1-\delta$. Note that the aggregation period cannot be arbitrarily increased without harming convergence. To illustrate the interplay between these variables, we provide a numerical analysis of Prop.~\ref{prop:fedDCworks} in App.~\ref{app:numericalanalysisProp5}.

This theoretical result is also evident in practice, as we show in Fig.~\ref{fig:fedDCvsRadonOnly}. There, we compare \ourmethod with standard federated learning and equip both with the iterated Radon point  on the SUSY binary classification dataset~\citep{baldi2014searching}. We train a linear model on $441$ clients with only $2$ samples per client. After $500$ rounds \ourmethod daisy-chaining every round ($d=1$) and aggregating every fifty rounds ($b=50$) reached the test accuracy of a  gold-standard model that has been trained on the centralized dataset (ACC=$0.77$).  Standard federated learning with the same communication complexity using $b=1$ is outperformed by a large margin (ACC=$0.68$). We additionally provide results of standard federated learning with $b=50$ (ACC=$0.64$), which shows that while the aggregated models perform reasonable, the standard approach heavily overfits on local datasets if not pulled to a global average in every round. More details on this experiment can be found in App.~\ref{app:radonsusy}. In Sec.~\ref{sec:experiments} we show that the empirical results for averaging as aggregation operator are similar to those for the Radon machine. First, we discuss the privacy-aspects of \ourmethod.

	\section{Data Privacy}
\label{sec:privacy}
\begin{wrapfigure}{r}{6.1cm}
\centering
\includegraphics{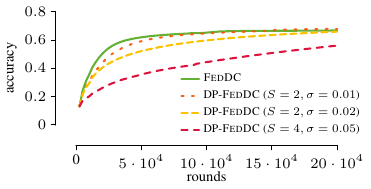}
\caption{\textit{Differential privacy results.} Comparison of \ourmethod (top solid line) to \ourmethod with clipped parameter updates and Gaussian noise (dashed lines) on CIFAR10 with $250$ clients.}
\label{fig:diffprivacy}
\end{wrapfigure}
\tikzexternalenable
A major advantage of federated over centralized learning is that local data remains undisclosed to anyone but the local client,  only model parameters are exchanged. This provides a natural benefit to data privacy, which is the main concern in applications such as healthcare. However, an attacker can make inferences about local data from model parameters~\citep{ma2020safeguarding} and model updates or gradients~\citep{zhu2020deep}. In the daisy-chaining rounds of \ourmethod clients receive a model that was directly trained on the local data of another client, instead of a model aggregate, potentially facilitating membership inference attacks~\citep{shokri2017membership}---reconstruction attacks~\citep{zhu2020deep} remain difficult because model updates cannot be inferred since the server randomly permutes the order of clients in daisy-chaining rounds.
Should a malicious client obtain model updates through additional attacks, a common defense is applying appropriate clipping and noise before sending models. This guarantees $\epsilon,\delta$-differential privacy for local data~\citep{wei2020federated} at the cost of a slight-to-moderate loss in model quality. This technique is also proven to defend against backdoor and poisoning attacks~\citep{sun2019can}. Moreover, \feddc is compatible with standard defenses against such attacks, such as noisy or robust aggregation~\citep{liu_threats_2022}---\ourmethod with the Radon machine is an example of robust aggregation.
We illustrate the effectiveness of \ourmethod with differential privacy in the following experiment. We train a small ResNet on $250$ clients using \ourmethod with $d=2$ and $b=10$, postponing the details on the experimental setup to App.~\ref{app:architectures} and~\ref{app:training}. Differential privacy is achieved by clipping local model updates and adding Gaussian noise as proposed by~\citet{geyer2017differentially}. The results as shown in Figure~\ref{fig:diffprivacy} indicate that the standard trade-off between model quality and privacy holds for \ourmethod as well. Moreover, for mild privacy settings the model quality does not decrease. That is, \ourmethod is able to robustly predict even under differential privacy. We provide an extended discussion on the privacy aspects of \ourmethod in App.~\ref{app:attacks}.

	\section{Experiments on Deep Learning}
\label{sec:experiments}
% \begin{figure*}[t]
%     \centering
%     \includegraphics[width=\textwidth]{}
%     \caption{\textit{Synthetic data results.} Comparison of \ourmethod (left) and FedAvg (right) for training MLPs on a synthetic dataset. Mean and confidence test accuracy of each client is reported in orange respectively yellow lines, where the optimal reachable accuracy, as given by centralized training, is 0.89.}
%     \label{fig:synthMLPFedDCvsFedAvg}
% \end{figure*}

Our approach \ourmethod, both provably and empirically, improves model quality when using Radon points as aggregation which, however, require convex problems.
For non-convex problems, in particular deep learning, averaging is the state-of-the-art aggregation operator.
% Many practical applications are so far only solved by non-convex approaches, as is the case for example in many imaging problems, where convolutional neural networks (CNNs) are among the state of the art.
We, hence, evaluate \ourmethod with averaging against the state of the art in federated learning on synthetic and real world data using neural networks.
%As NNs are non-convex, Radon points are no longer suitable as aggregation method, so we instead resort to averaging.
%
As baselines, we consider federated averaging (\fedavg)~\citep{mcmahan2017communication} with optimal communication, \fedavg with  equal communication as \ourmethod, and simple daisy-chaining without aggregation.
We further consider the $4$ state-of-the-art methods \fedprox~\citep{li2020federated}, \fedadagrad, \fedyogi, and \fedadam~\citep{reddi2020adaptive}.
As datasets we consider a synthetic classification dataset, image classification in CIFAR10~\citep{krizhevsky2009learning}, and two real medical datasets: MRI scans for brain tumors,\!\footnote{kaggle.com/navoneel/brain-mri-images-for-brain-tumor-detection} and chest X-rays for pneumonia\footnote{kaggle.com/praveengovi/coronahack-chest-xraydataset}. We provide additional results on MNIST in App.~\ref{app:mnist}.
Details on the experimental setup are in App.~\ref{app:architectures},\ref{app:training}, code is publicly available at~\url{\oururl}.
%The implementation of the experiments is publicly available at~\url{https://anonymous.4open.science/r/FedDC-19E0}.

\paragraph{Synthetic Data:}
\begin{figure*}[t]
 \centering
 \begin{subfigure}[t]{0.34\textwidth}
 \includegraphics{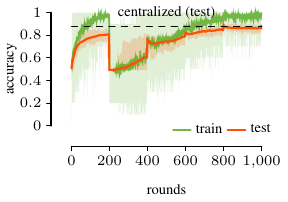}
%   \ifpdf
%   \tikzsetnextfilename{synthfeddc}
%   \begin{tikzpicture}
%     \begin{axis}[
%      eda line,
%      xlabel = {rounds}, 
%      ylabel= {accuracy},
%      width = 4.8cm,
%      height = 3.5cm,
%      no markers,
%      xtick = {0, 200, ..., 1000},
%      xticklabel style={rotate=0},
% 	 cycle list name = {mamba},
%      ytick={0.0,0.2,...,1.0},
%      xmin=0, xmax=1000,
%      ymin=0.0, ymax=1.0,
%      x label style = {at={(axis description cs:0.5,-0.45)}, anchor=north, font=\scriptsize},
%      y label style = {at={(axis description cs:-0.25,0.5)}, anchor=south, font=\scriptsize},
% %     legend pos = south east
% legend columns=2, legend style={nodes={scale=1.0, transform shape}, at={(0.80,0.08)}, anchor=north, draw=none}
%     ]
% 	\errorband[green(ryb), opacity=0.2] {expres/synthetic/synth_feddc_train.txt}{0}{2}{3}
% 	\errorband[internationalorange, opacity=0.2] {expres/synthetic/synth_feddc_test.txt}{0}{2}{3}
%  	\addplot+[green(ryb), line width=0.8pt, opacity=0.9] table[x index = 0, y index = 1, header = true] {expres/synthetic/synth_feddc_train.txt};
%  	\addplot+[internationalorange, line width=1.0pt] table[x index = 0, y index = 1, header = true] {expres/synthetic/synth_feddc_test.txt};
%  	\addlegendentry{train}
% 	\addlegendentry{test}
% 	\draw [dashed, color=black] (axis cs:0,0.875) -- (axis cs:1000,0.875) node[pos=0.5, above] {\scriptsize centralized (test)};
%   \end{axis}
%   \end{tikzpicture}%
%   \fi
  \caption{\ourmethod with $d=1$, $b=200$.}
\label{fig:synthMLPFedDCvsFedAvg:feddc}
\end{subfigure}
\hfill
 \begin{subfigure}[t]{0.32\textwidth}
 \includegraphics{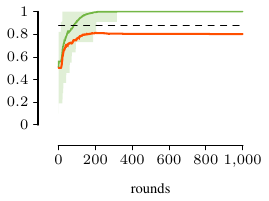}
%  \ifpdf
%   \tikzsetnextfilename{synthavg1}
%   \begin{tikzpicture}
%     \begin{axis}[
%      eda line,
%      xlabel = {rounds}, 
%      %ylabel= {accuracis},
%      width = 4.7cm,
%      height = 3.5cm,
%      no markers,
%      xtick = {0, 200, ..., 1000},
%      xticklabel style={rotate=0},
% 	 cycle list name = {mamba},
%      ytick={0.0,0.2,...,1.0},
%      xmin=0, xmax=1000,
%      ymin=0.0, ymax=1.0,
%      x label style = {at={(axis description cs:0.5,-0.45)}, anchor=north, font=\scriptsize},
%      y label style = {at={(axis description cs:-0.1,0.5)}, anchor=south, font=\scriptsize},
% %     legend pos = south east
% %legend columns=2, legend style={nodes={scale=1.0, transform shape}, at={(0.80,0.08)}, anchor=north}
%     ]
% 	\errorband[green(ryb), opacity=0.2] {expres/synthetic/synth_fedavg1_train.txt}{0}{2}{3}
% 	\errorband[internationalorange, opacity=0.2] {expres/synthetic/synth_fedavg1_test.txt}{0}{2}{3}
%  	\addplot+[green(ryb), line width=0.8pt, opacity=0.9] table[x index = 0, y index = 1, header = true] {expres/synthetic/synth_fedavg1_train.txt};
%  	\addplot+[internationalorange, line width=1.0pt] table[x index = 0, y index = 1, header = true] {expres/synthetic/synth_fedavg1_test.txt};
%  	%\addlegendentry{train}
% 	%\addlegendentry{test}
% 	\draw [dashed, color=black] (axis cs:0,0.875) -- (axis cs:1000,0.875);
%   \end{axis}
%   \end{tikzpicture}%
%   \fi
  \caption{\fedavg with $b=1$.}
\label{fig:synthMLPFedDCvsFedAvg1:avg}
\end{subfigure}
\hfill
\begin{subfigure}[t]{0.32\textwidth}
\includegraphics{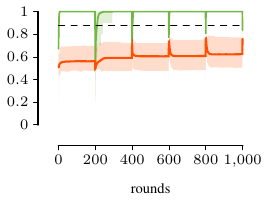}
% \ifpdf
% \tikzsetnextfilename{synthavg}
% \begin{tikzpicture}
% \begin{axis}[
%  eda line,
%  xlabel = {rounds}, 
%  %ylabel= {accuracis},
%  width = 4.7cm,
%  height = 3.5cm,
%  no markers,
%  xtick = {0, 200, ..., 1000},
%  xticklabel style={rotate=0},
%  cycle list name = {mamba},
%  ytick={0.0,0.2,...,1.0},
%  xmin=0, xmax=1000,
%  ymin=0.0, ymax=1.0,
%  x label style = {at={(axis description cs:0.5,-0.45)}, anchor=north, font=\scriptsize},
%  y label style = {at={(axis description cs:-0.1,0.5)}, anchor=south, font=\scriptsize},
% %     legend pos = south east
% %legend columns=2, legend style={nodes={scale=1.0, transform shape}, at={(0.80,0.08)}, anchor=north}
% ]
% \errorband[green(ryb), opacity=0.2] {expres/synthetic/synth_fedavg_train.txt}{0}{2}{3}
% \errorband[internationalorange, opacity=0.2] {expres/synthetic/synth_fedavg_test.txt}{0}{2}{3}
%  \addplot+[green(ryb), line width=0.8pt, opacity=0.9] table[x index = 0, y index = 1, header = true] {expres/synthetic/synth_fedavg_train.txt};
%  \addplot+[internationalorange, line width=1.0pt] table[x index = 0, y index = 1, header = true] {expres/synthetic/synth_fedavg_test.txt};
%  %\addlegendentry{train}
% %\addlegendentry{test}
% \draw [dashed, color=black] (axis cs:0,0.875) -- (axis cs:1000,0.875);
% \end{axis}
% \end{tikzpicture}%
% \fi
\caption{\fedavg with $b=200$.}
\label{fig:synthMLPFedDCvsFedAvg:avg}
\end{subfigure}
\caption{\textit{Synthetic data results.} Comparison of \ourmethod (a), \fedavg with same communication (b) and same averaging period (c) for training fully connected NNs on synthetic data. We report mean and confidence accuracy per client in color and accuracy of central learning as dashed black line.}
\label{fig:synthMLPFedDCvsFedAvg}
\end{figure*}
\tikzexternalenable
We first investigate the potential of \ourmethod on a synthetic binary classification dataset generated by the sklearn~\citep{pedregosa2011scikit} \verb+make_classification+ function with $100$ features. On this dataset, we train a simple fully connected neural network with $3$ hidden layers on $m=50$ clients with $n=10$ samples per client.
We compare \ourmethod with daisy-chaining period $d=1$ and aggregation period $b=200$ to \fedavg with the same amount of communication $b=1$ and the same averaging period $b=200$. The results presented in Fig.~\ref{fig:synthMLPFedDCvsFedAvg} show that \ourmethod achieves a test accuracy of $0.89$. This is comparable to centralized training on all data which achieves a test accuracy of $0.88$. It substantially outperforms both \fedavg setups, which result in an accuracy of $0.80$ and $0.76$. Investigating the training of local models between aggreation periods reveals that the main issue of \fedavg is overfitting of local clients, where \fedavg \textit{train} accuracy reaches $1.0$ quickly after each averaging step. 
%In the following, we investigate how these promising results translate to real-world datasets.
With these promising results on vanilla neural networks, we next turn to real-world image classification problems typically solved with CNNs.

\paragraph{CIFAR10:}
As a first challenge for image classification, we consider the well-known CIFAR10 image benchmark.
We first investigate the effect of the aggregation period $b$ on \ourmethod and \fedavg, separately optimizing for an optimal period for both methods.
We use a setting of $250$ clients with a small version of ResNet, and $64$ local samples each, which simulates our small sample setting, drawn at random without replacement (details in App.~\ref{app:training}).
We report the results in Figure~\ref{fig:avgPeriod} and set the period for \ourmethod to $b=10$, and consider federated averaging with periods of both $b=1$ (equivalent communication to \ourmethod with $d=1,b=10$) and $b=10$ (less communication than \ourmethod by a factor of $10$) for all subsequent experiments.

\begin{wrapfigure}{r}{7.0cm}
%\begin{figure}
 \centering
 \includegraphics{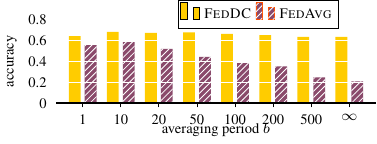}
  % \ifpdf
  % \tikzsetnextfilename{avgperiod}
  % \begin{tikzpicture}
  %       \begin{axis}[eda ybar,
  %                   xtick=data,
  %                   xticklabels={1,10,20,50,100,200,500,$\infty$},
  %                   ytick={0,20,40,60,80},
  %                   yticklabels={0,0.2,0.4,0.6,0.8},
  %                   ymin=0.0, ymax=80.0,
  %                   legend columns=2,
  %                   legend style={nodes={scale=1.0, transform shape, draw=none}, at={(axis cs:3.5,85.0)}, anchor=west},
  %                    x label style = {at={(axis description cs:0.5,-0.25)}, anchor=north, font=\scriptsize},
  %                    y label style = {at={(axis description cs:-0.1,0.5)}, anchor=south, font=\scriptsize},
  %                   height=3.0cm,
  %                   width=7.0cm,
  %                    xlabel = {averaging period $b$}, 
  %                    ylabel= {accuracy},
  %                    bar width=0.2cm
  %                   ]
  %           \pgfplotstableread{expres/averagingPeriod/avgPeriod.txt}\loadedtable;
  %           \addplot[fill=uscgold, draw=none] table[x=id, y=FedDC] {\loadedtable};
  %           \addplot+[fill=twilightlavender, 
  %                     postaction={pattern=north east lines, pattern color=white}, draw=none] table[x=id, y=FedAvg]
  %           {\loadedtable}; 
  %           \addlegendentry{\ourmethod}
  %           \addlegendentry{\fedavg}
  %       \end{axis}
  % \end{tikzpicture}%
  % \fi

\caption{\textit{Averaging periods on CIFAR10.} For 150 clients with small ResNets and 64 samples per client, we visualize the test accuracy (higher is better) of \ourmethod and \fedavg for different aggregation periods $b$.
%\ourmethod with $d=1$ maintains high accuracy for higher averaging periods $b$ than FedAvg in terms of test accuracy on CIFAR10 with 250 clients and 64 samples per client.
}
\label{fig:avgPeriod}
%\end{figure}
\end{wrapfigure}
\tikzexternalenable
Next, we consider a subset of $9600$ samples spread across $150$ clients (i.e. $64$ samples per client), which corresponds to our small sample setting. Now,  each client is equipped with a larger, untrained ResNet18.\!\footnote{Due to hardware restrictions we are limited to training 150 ResNets, hence 9600 samples across 150 clients.}
%To compare \ourmethod with the state of the art, we first consider the CIFAR10 image benchmark.
%To find a suitable aggregation period $b$ for \ourmethod and FedAvg, we run a grid search across periods for 250 clients with small versions of ResNet (details in Supp. \ref{app:training}).
%We report the results in Figure~\ref{fig:avgPeriod} and set the period for \ourmethod to $b=10$, and consider federated averaging with periods of both $b=1$ (equivalent communication to \ourmethod with $d=1,b=10$) and $b=10$ (less communication than \ourmethod by a factor of $10$).
%For our next experiment, we equip $150$ clients each with a ResNet18.
%To simulate our setting that each client has a small amount of samples, each one of them only receives 64 samples. 
%
Note that the combined amount of examples is only one fifth of the original training data, hence we cannot expect typical CIFAR10 performance.
To obtain a gold standard for comparison, we run centralized learning \central, separately optimizing its hyperparameters, yielding an accuracy of around $0.65$.
All results are reported in Table~\ref{tab:main_results}, where we report \fedavg with $b=1$ and $b=10$, as these were the best performing settings and $b=1$ corresponds to equal amounts of communication as \feddc.
We use a daisy chaining period of $d=1$ for \feddc throughout all experiments for consistency, and provide results for larger daisy chaining periods in App.~\ref{app:cifarothers}, which, depending on the data distribution, might be favorable.
We observe that \ourmethod achieves substantially higher accuracy over the baseline set by federated averaging. In App.~\ref{app:clientsubsampling} we show that this holds also for client subsampling.
Upon further inspection, we see that \fedavg drastically overfits, achieving training accuracies of $0.97$ (App.~\ref{app:cifartrainacc}), a similar trend as on the synthetic data before.
Daisy-chaining alone, apart from privacy issues,  also performs worse than \ourmethod. 
Intriguingly, also the state of the art shows similar trends. 
\fedprox, run with optimal $b=10$ and $\mu=0.1$, only achieves an accuracy of $0.51$ and \fedadagrad, \fedyogi, and \fedadam show even worse performance of around $0.22$, $0.31$, and $0.34$, respectively. While applied successfully on large-scale data, these methods seem to have shortcomings when it comes to small sample regimes.
%
% \begin{wraptable}{r}{5cm}
%     \centering
%     \begingroup
%     \footnotesize
%     \begin{tabular}{ll}
%         \toprule
%         \textbf{method} & \textbf{avg. ACC}\\
%         \midrule
%         \ourmethod & $\mathbf{62.9}$ \scriptsize{$\mathbf{\pm 0.02}$} \\[0.05cm]
%         \ourmethod+\fedprox & $\mathbf{63.2}$ \scriptsize{$\mathbf{\pm 0.38}$}\\[0.05cm]
%         \fedavg & $48.7$ \scriptsize{$\pm 0.87$}\\[0.05cm]
%         \fedprox & $51.1$ \scriptsize{$\pm 0.80$} \\[0.05cm]
%         \bottomrule
%     \end{tabular}
%     \endgroup
%     \caption{\ourmethod combined with \fedavg, \fedprox, and the standard \fedavg and \fedprox on CIFAR10 (again, average test accuracy over three runs and maximum deviation).}
%     \label{tab:feddc_with_fedprox}
% \end{wraptable}
 %Interestingly, \ourmethod is therefore slightly better than centralized training with the same parameters and only slightly worse than \centralopt.
%\TODO{Explain that parameters are tuned for each method.}

To model different data distributions across clients that could occur in for example our healthcare setting, we ran further experiments on simulated non-iid data, gradually increasing the locally available data, as well as on non-privacy preserving decentralized learning.
We investigate the effect of non-iid data on \ourmethod by studying the ``pathological non-IID partition of the data''~\citep{mcmahan2017communication}. Here, each client only sees examples from $2$ out of the $10$ classes of CIFAR10. We again use a subset of the dataset. The results in Tab.~\ref{tab:noniid} show that \ourmethod outperforms \fedavg by a wide margin. It also outperforms \fedprox, a method specialized on heterogeneous datasets in our considered small sample setting. %, as well as \fedadagrad, \fedyogi, and \fedadam.
For a similar training setup as before, we show results for  gradually increasing local datasets in App.~\ref{app:localdatasetsize}. Most notably, \feddc outperforms \fedavg even with 150 samples locally. Only when the full CIFAR10 dataset is distributed across the clients, \fedavg is on par with \feddc (see App. Fig.~\ref{fig:dataset_size}).
We also compare with distributed training through gradient sharing (App.~\ref{app:minibatchSGD}), which discards any privacy concerns, implemented by mini-batch SGD with parameter settings corresponding to our federated setup as well as a separately optimized version. The results show that such an approach is outperformed by both \fedavg as well as \feddc, which is in line with previous findings and emphasize the importance of model aggregation.

As a final experiment on CIFAR10, we consider daisy-chaining with different combinations of aggregation methods, and hence its ability to serve as a building block that can be combined with other federated learning approaches.
In particular, we consider the same setting as before and combine \fedprox with daisy chaining.
The results, reported in Tab.~\ref{tab:noniid}, show that this combination is not only successful, but also outperforms all others in terms of accuracy.% Results on non-iid data are provided in App.~\ref{app:noniid}.

\begin{table}[t]
\parbox[t]{.59\linewidth}{\hrule height 0pt width 0pt 
    %\begingroup
    \footnotesize
    \setlength\tabcolsep{5pt}
    \begin{tabular}{lccc}
        \toprule
        %\diagbox[font=\scriptsize,trim=lr, linecolor=white]{method}{dataset} & \textbf{CIFAR10} & \textbf{MRI} & \textbf{Pneumonia}\\
         & \textbf{CIFAR10} & \textbf{MRI} & \textbf{Pneumonia}\\
        \midrule
        \ourmethod (ours) & $\mathbf{62.9}$ \hspace{-0.5pt}\scriptsize{$\mathbf{\pm 0.02}$} & $\mathbf{78.4}$ \hspace{-0.5pt}\scriptsize{$\mathbf{\pm 0.61}$} & $\mathbf{83.2}$ \hspace{-0.5pt}\scriptsize{$\mathbf{\pm 0.84}$}\\[0.05cm]
        DC (baseline) & $58.4$ \scriptsize{$\pm 0.85$} & $57.7$ \scriptsize{$\pm 1.57$} & $79.8$ \scriptsize{$\pm 0.99$}\\[0.05cm]
        \fedavg(b=1) & $55.8$ \scriptsize{$\pm 0.78$} & $74.1$ \scriptsize{$\pm 1.68$} & $80.1$ \scriptsize{$\pm 1.53$}\\[0.05cm]
        \fedavg(b=10) & $48.7$ \scriptsize{$\pm 0.87$} & $75.6$ \scriptsize{$\pm 1.18$} & $79.4$ \scriptsize{$\pm 1.11$}\\[0.05cm]
        {\fedprox} & $51.1$ \scriptsize{$\pm 0.80$} & $76.5$ \scriptsize{$\pm 0.50$} & $80.0$ \scriptsize{$\pm 0.36$} \\[0.05cm]
        {\fedadagrad} & $21.8$ \scriptsize{$\pm 0.01$} & $45.7$ \scriptsize{$\pm 1.25$} & $62.5$ \scriptsize{$\pm 0.01$} \\[0.05cm]
        {\fedyogi} & $31.4$ \scriptsize{$\pm 4.37$} & $71.3$ \scriptsize{$\pm 1.62$} & $77.6$ \scriptsize{$\pm 0.64$} \\[0.05cm]
        {\fedadam} & $34.0$ \scriptsize{$\pm 0.23$} & $73.8$ \scriptsize{$\pm 1.98$} & $73.5$ \scriptsize{$\pm 0.36$} \\[0.05cm]
        %{\scaffold} & $59.4$ \scriptsize{$\pm 0.44$} & $40.0$ \scriptsize{$\pm 0.0$} & $-$ \scriptsize{$\pm 0.-$}\\[0.05cm]
        \midrule
        %{\central} & $61.7$ \scriptsize{$\pm 1.17$} & $79.9$ \scriptsize{$\pm 6.23$} & $81.1$ \scriptsize{$\pm 3.04$} \\[0.05cm]
        {\central} & $65.1$ \scriptsize{$\pm 1.44$} & $82.1$ \scriptsize{$\pm 1.00$} & $84.1$ \scriptsize{$\pm 3.31$} \\[0.05cm]
        \bottomrule
    \end{tabular}
    %\endgroup
    \vspace{0.2cm}
    \caption{Results on image data, reported is the average test accuracy of the final model over three runs ($\pm$ denotes maximum deviation from the average).}
    \label{tab:main_results}\vfill
}\hfill
\parbox[t]{.36\linewidth}{\hrule height 0pt width 0pt 
    % %\begingroup
    % \footnotesize
    % \centering
    % \begin{tabular}{ll}
    %     \toprule
    %     & \textbf{CIFAR10}\\
    %     \midrule
    %     \ourmethod & $\mathbf{62.9}$ \scriptsize{$\mathbf{\pm 0.02}$} \\[0.05cm]
    %     \ourmethod+\fedprox & $\mathbf{63.2}$ \scriptsize{$\mathbf{\pm 0.38}$}\\[0.05cm]
    %     %\fedavg & $48.7$ \scriptsize{$\pm 0.87$}\\[0.05cm]
    %     %\fedprox & $51.1$ \scriptsize{$\pm 0.80$} \\[0.05cm]
    %     \bottomrule
    % \end{tabular}
    % %\endgroup
    % \vspace{0.1cm}
    % \caption{Combination of \ourmethod with \fedavg and \fedprox on CIFAR10.}\vfill
    % \label{tab:feddc_with_fedprox}
    
    % \centering
    % \begingroup
    % \footnotesize
    % \setlength\tabcolsep{5pt}
    % \begin{tabular}{cc}
    %     \toprule
    %      \textbf{} & \textbf{Non-IID CIFAR10} \\
    %     \midrule
    %     \ourmethod & $\mathbf{34.2}$ \scriptsize{$\mathbf{\pm 0.61}$}\\[0.05cm]
    %     \fedavg(b=1) & $30.2$ \scriptsize{$\pm 2.11$}\\[0.05cm]
    %     \fedavg(b=10) & $24.9$ \scriptsize{$\pm 1.95$}\\[0.05cm]
    %     \fedprox & $32.8$ \scriptsize{$\pm 0.00$}\\[0.05cm]
    %     \fedadagrad & $11.7$ \scriptsize{$\pm 0.00$}\\[0.05cm]
    %     \fedadam & $13.0$ \scriptsize{$\pm 0.00$}\\[0.05cm]
    %     \fedyogi & $12.5$ \scriptsize{$\pm 0.04$}\\[0.05cm]
    %     \bottomrule
    % \end{tabular}
    % \endgroup
    % \vspace{0.1cm}
    % \caption{Non-iid results on CIFAR10 (clients see only $2$ out of $10$ classes).}
    % \label{tab:noniid}
    
    \begingroup
    \footnotesize
    \setlength\tabcolsep{4pt}
    \begin{tabular}{lc}
        \toprule
        & \textbf{CIFAR10}\\
        \midrule
        \ourmethod & $\mathbf{62.9}$ \scriptsize{$\mathbf{\pm 0.02}$} \\[0.05cm]
        \ourmethod+\fedprox & $\mathbf{63.2}$ \scriptsize{$\mathbf{\pm 0.38}$}\\[0.05cm]
        %\fedavg & $48.7$ \scriptsize{$\pm 0.87$}\\[0.05cm]
        %\fedprox & $51.1$ \scriptsize{$\pm 0.80$} \\[0.05cm]
        %\bottomrule
        \toprule
         \textbf{} & \textbf{Non-IID}\\% CIFAR10} \\
        \midrule
        \ourmethod & $\mathbf{34.2}$ \scriptsize{$\mathbf{\pm 0.61}$}\\[0.05cm]
        \fedavg(b=1) & $30.2$ \scriptsize{$\pm 2.11$}\\[0.05cm]
        \fedavg(b=10) & $24.9$ \scriptsize{$\pm 1.95$}\\[0.05cm]
        \fedprox & $32.8$ \scriptsize{$\pm 0.00$}\\[0.05cm]
        \fedadagrad & $11.7$ \scriptsize{$\pm 0.00$}\\[0.05cm]
        \fedadam & $13.0$ \scriptsize{$\pm 0.00$}\\[0.05cm]
        \fedyogi & $12.5$ \scriptsize{$\pm 0.04$}\\[0.05cm]
        \bottomrule
    \end{tabular}
    \endgroup
    \vspace{0.1cm}
    \caption{Combination of \ourmethod with \fedavg and \fedprox and non-iid results on CIFAR10.}% (clients see only $2$ out of $10$ classes).}
    \label{tab:noniid}
}
\end{table}

\paragraph{Medical image data:}
Finally, we consider two real medical image datasets representing actual health related machine learning tasks, which are naturally of small sample size. 
For the brain MRI scans, we simulate $25$ clients (e.g.,  hospitals) with $8$ samples each. Each client is equipped with a CNN (see App. \ref{app:architectures}). The results for brain tumor prediction evaluated on a test set of $53$ of these scans are reported in Table~\ref{tab:main_results}. 
Overall, \ourmethod performs best among the federated learning approaches and is close to the centralized model.
Whereas \fedprox performed comparably poorly on CIFAR10, it now outperforms \fedavg.
%Also \fedyogi and \fedadam fare much better than on CIFAR10, although for all these methods there is still a considerable margin towards \ourmethod.
Similar to before, we observe a considerable margin between all competing methods and \ourmethod.
To investigate the effect of skewed distributions of sample sizes across clients, such as smaller hospitals having less data than larger ones, we provide additional experiments in App.~\ref{app:cifarothers}. The key insight is that also in these settings, \feddc outperforms \fedavg considerably, and is close to its performance on the unskewed datasets.

For the pneumonia dataset, we simulate $150$ clients training ResNet18 (see App. \ref{app:architectures}) with $8$ samples per client, the hold out test set are $624$ images. The results, reported in Table~\ref{tab:main_results}, show  similar trends as for the other datasets, with \ourmethod outperforming all baselines and the state of the art, and being within the performance of the centrally trained model. Moreover it highlights that \ourmethod enables us to train a ResNet18 to high accuracy with as little as $8$ samples per client.

	\section{Discussion and Conclusion}
\label{sec:discussion}
We propose to combine daisy-chaining and aggregation to effectively learn high quality models in a federated setting where only little data is available locally.
We formally prove convergence of our approach \ourmethod, and for convex settings provide PAC-like generalization guarantees when aggregating by iterated Radon points. Empirical results on the SUSY benchmark underline these theoretical guarantees, with \ourmethod matching the performance of centralized learning.
Extensive empirical evaluation shows that the proposed combination of daisy-chaining and aggregation enables federated learning from small datasets in practice.%, consistently outperforming the state of the art.
When using averaging, we improve upon the state of the art for federated deep learning by a large margin for the considered small sample settings. 
Last but not least, we show that daisy-chaining is not restricted to \feddc, but can be straight-forwardly included in \fedavg, Radon machines, and \fedprox as a building block, too.

\ourmethod permits differential privacy mechanisms that introduce noise on model parameters, offering protection against membership inference, poisoning and backdoor attacks. Through the random permutations in daisy-chaining rounds, \ourmethod is also robust against reconstruction attacks.
Through the daisy-chaining rounds, we see a linear increase in communication. As we are primarily interested in healthcare applications, where communication is not a bottleneck, such an increase in communication is negligible. 
Importantly, \ourmethod outperforms \fedavg in practice also when both use the same amount of communication.
Improving the communication efficiency considering settings where bandwidth is limited, e.g., model training on mobile devices, would make for engaging future work.

We conclude that daisy-chaining lends itself as a simple, yet effective building block to improve federated learning, complementing existing work to extend to settings where little data is available per client.
\ourmethod, thus, might offer a solution to the open problem of federated learning in healthcare, where very few, undisclosable samples are available at each site.

    \vfill
    \pagebreak
    \subsubsection*{Acknowledgments}
    The authors thank Sebastian U. Stich for his detailed comments on an earlier draft. 
    Michael Kamp received support from the Cancer Research Center Cologne Essen (CCCE).
    Jonas Fischer is supported by a grant from the US National Cancer Institute (R35CA220523).
    {
        \bibliographystyle{plainnat}
        \bibliography{references}
    }
    \vfill
    \pagebreak
    \appendix
	\section{Appendix}
\label{sec:apx}

\subsection{Details on Experimental Setup}
\label{app:expdetails}
In this section we provide all details to reproduce the empirical results presented in this paper. Furthermore, the implementation provided at~\url{\oururl} allows to directly reproduce the result.
Experiments were conducted on an NVIDIA DGX with six A6000 GPUs. 

\subsubsection{Network architectures}
\label{app:architectures}
Here, we detail network architectures considered in our empirical evaluation.

\paragraph{MLP for Synthetic Data}
A standard multilayer perceptron (MLP) with ReLU activations and three linear layers of size $100$,$50$,$20$.

\paragraph{Averaging round experiment}
 For this set of experiments we use smaller versions of ResNet architectures with 3 blocks, where the blocks use $16,32,64$ filters, respectively. In essence, these are smaller versions of the original ResNet18 to keep training of 250 networks feasible.

\paragraph{CIFAR10 \& Pneumonia} For CIFAR10, we consider a standard ResNet18 architecture, where weights are initialized by a Kaiming Normal and biases are zero-initialized. Each client constructs and initializes a ResNet network separately. For pneumonia, X-ray images are resized to $(224,224)$.

\paragraph{MRI} For the MRI data, we train a small convolutional network of architecture Conv(32)-Batchnorm-ReLU-MaxPool-Conv(64)-Batchnorm-ReLU-MaxPool-Linear, where Conv($x$) are convolutional layers with $x$ filters of kernel size 3.
The pooling layer uses a stride of $2$ and kernel size of $2$. The Linear layer is of size $2$ matching the number of output classes. All scan images are resized to $(150,150)$.

\subsubsection{Training setup}
\label{app:training}

In this section, we give additional information for the training setup for each individual experiment of our empirical evaluation.

\paragraph{SUSY experiments}
SUSY is a binary classification dataset with $18$ features. We train linear models with stochastic gradient descent (learning rate $0.0001$, found by grid-search on an independent part of the dataset) on $441$ clients, aggregating every $50$ rounds via the iterated Radon point~\citep{kamp2017effective} with $h=2$ iterations. \ourmethod performs daisy-chaining with period $d=1$. The test accuracy is evaluated on a test set with $1\,000\,000$ samples drawn iid at random.

\paragraph{Synthetic Data}
The synthetic binary classification dataset is generated by the sklearn~\citep{pedregosa2011scikit} \verb+make_classification+ function with $100$ features of which $20$ are informative, $60$ are redundant, and $5$ are repeated. We generate $3$ clusters per class with a class separation of $1.0$, a shift of $1.0$ and a scale of $3.0$. Class labels are randomly flipped with probability $0.02$.

\paragraph{Averaging rounds parameter optimization} To find a suitable number when averaging should be carried out, we explore $b\in\{1,10,20,50,100,200,500,\infty\}$ on CIFAR10 using $250$ clients each equipped with a small ResNet.
We assign 64 samples to each client drawn at random (without replacement) from the CIFAR10 training data and use a batch size of $64$. For each parameter, we train for $10k$ rounds with SGD using cross entropy loss and initial learning rate of $0.1$, multiplying the rate by a factor of $.5$ every $2500$ rounds.

\paragraph{FedAdam, FedAdagrad, and FedYogi}
We use the standard values for $\beta_1$ and $\beta_2$, i.e., $\beta_1=0.9$, $\beta_2=0.999$, as suggested in~\citet{reddi2020adaptive}. We optimized learning rate $\eta_l$ and global learning rate $\eta$ from the set $\{0.001,0.01,0.1,1.0,2.0\}$ yielding optimal parameters $\eta_l=0.1$ and $\eta=1.0$. 

\paragraph{CIFAR10 differential privacy and main experiments}

We keep the same experimental setup as for hyperparameter tuning, but now use $100$ clients each equipped with a ResNet18. 

\subsection{Iterated Radon Points and the Radon Machine}
\label{app:radonmachine}
The Radon machine~\citep{kamp2017effective} aggregates a set $S={h^1,\dots,h^m}$ of local models $h^i\in\modelspace$ via the iterated Radon point algorithm~\citep{clarkson1996approximating}. For models with $d\in\NN$ parameters, $r=d+2$ many models are required to compute a single Radon point, where $r$ is called the Radon number of $\modelspace$. Let $m=r^h$ for some $h\in\NN$, then the iterated Radon point aggregates models in $h$ iterations. In each iteration, the set $S$ is partitioned into subsets of size $r$ and the Radon point of each subset is calculated. The final step of each iteration is to replace the set $S$ of models by the set of Radon points. After $h$ iterations, a single Radon point $\mathfrak{r}_h$ is obtained as the aggregate. 

Radon points can be obtained by solving a system of linear equations of size $r\times r$~\citep{kamp2017effective}: In his main theorem, \citet{radon1921mengen} gives the following construction of a Radon point for a set $S=\{s_1,...,s_r\}\subseteq\RR^d$. Find a non-zero solution $\lambda\in\RR^{|S|}$ for the following linear equations.
\[
\sum_{i=1}^r\lambda_i s_i = (0,\dots,0)\enspace , \enspace \sum_{i=1}^r \lambda_i = 0
\]
Such a solution exists, since $|S| > d+1$ implies that $S$ is linearly dependent. Then, let $I,J$ be index sets such that for all $i\in I: \lambda_i\geq 0$ and for all $j\in J: \lambda_j < 0$. Then a Radon point is defined by
\[
\mathfrak{r}(\lambda) = \sum_{i\in I}\frac{\lambda_i}{\Lambda}s_i = \sum_{j\in J}\frac{\lambda_j}{\Lambda}s_j\enspace ,
\]
where $\Lambda = \sum_{i\in I}\lambda_i = -\sum_{j\in J}\lambda_j$. Any solution to this linear system of equations is a Radon point. The equation system can be solved in time $r^3$. By setting the first element of $\lambda$ to one, we obtain a unique solution of the system of linear equations. Using this solution $\lambda$, we define the Radon point of a set $S$ as $\mathfrak{r}(S)=\mathfrak{r}(\lambda)$ in order to resolve ambiguity.

%%%%%%%%%%%%%%%%%%%%%%%%%%%%%%%%%%%%%%%%%%
% Dropout Experiments:

% FedDC dropout 0.2 10000 rounds
% 0.5013142857142857, 0.5662272727272727, 0.52829 -> 0.5319438528 +- 0.0342834199

% FedAvg dropout 0.2 10000 rounds
%0.38915999999999995, 0.49462222222222224, 0.492 -> 0.4585940740737333 +- 0.06943407407374

% FedDC dropout 0.2 30000 rounds
% 0.5997, 0.6301884615384615, 0.60127 -> 0.6103861538461538333333333333 +- 0,01068615384615383333

% FedAvg dropout 0.2 30000 rounds
%0.4412923076923076, 0.5341125, 0.505460606060606 -> 0,4936218045843045333333333 +- 0,0523294968919969333333

\subsection{Additional Empirical Results}
In Sec.~\ref{sec:experiments} we have shown that \ourmethod performs well on benchmark and real-world datasets. In the following we provide additional empirical results, both to investigate the main results more closely, as well as to further investigate the properties of \ourmethod. For the CIFAR10 experiment, we investigate training accuracy (App.~\ref{app:cifartrainacc}) and present results for distributed mini-batch SGD (App.~\ref{app:minibatchSGD}). For the SUSY experiment, we compare to \fedavg (App.~\ref{app:radonsusy}). As additional experiments, we investigate the impact of local dataset size (App.~\ref{app:localdatasetsize}) and skewed dataset size distributions (App.~\ref{app:cifarothers}), and analyze the communication-efficiency of \feddc (App.~\ref{app:commeff}). Finally, we present results on MNIST where \ourmethod achieves state-of-the-art accuracy (App.~\ref{app:mnist}).

\subsubsection{Train and test accuracies on CIFAR10:}
%\paragraph{Train and test accuracies on CIFAR10:}
\label{app:cifartrainacc}
In Table~\ref{tab:traintest} we provide the accuracies on the entire training set for the final model, together with test accuracies, on CIFAR10. The high training accuracies of \fedavg ($\approx 0.97$)---and to a lesser degree \fedprox ($0.96$)---indicate overfitting on local data sets. The poor training performance of \fedadagrad, \fedyogi, and \fedadam hint at insufficient model updates. A possible explanation is that the adaptive learning rate parameter (which is proportional to the sum of past model updates) becomes large quickly, essentially stopping the training process. The likely reason is that due to large differences in local data distributions, model updates after each aggregation round are large.
\begin{table}[ht]
    \centering
    \begingroup
    \footnotesize
    \setlength\tabcolsep{5pt}
    \begin{tabular}{lcc}
        \toprule
        %\diagbox[font=\scriptsize,trim=lr, linecolor=white]{method}{dataset} & \textbf{CIFAR10} & \textbf{MRI} & \textbf{Pneumonia}\\
         & \textbf{Test} & \textbf{Train} \\
        \midrule
        \ourmethod (ours) & ${62.9}$ \scriptsize{${\pm 0.02}$} & ${94.7}$ \scriptsize{${\pm 0.52}$}\\[0.05cm]
        DC (baseline) & $58.4$ \scriptsize{$\pm 0.85$} & $94.1$ \scriptsize{$\pm 2.31$} \\[0.05cm]
        \fedavg(b=1) & $55.8$ \scriptsize{$\pm 0.78$} & $97.2$ \scriptsize{$\pm 0.87$} \\[0.05cm]
        \fedavg(b=10) & $48.7$ \scriptsize{$\pm 0.87$} & $97.4$ \scriptsize{$\pm 0.23$} \\[0.05cm]
        {\fedprox} & $51.1$ \scriptsize{$\pm 0.80$} & $95.9$ \scriptsize{$\pm 0.42$} \\[0.05cm]
        {\fedadagrad} & $21.8$ \scriptsize{$\pm 0.01$} & $31.7$ \scriptsize{$\pm 0.25$} \\[0.05cm]
        {\fedyogi} & $31.4$ \scriptsize{$\pm 4.37$} & $72.4$ \scriptsize{$\pm 0.90$} \\[0.05cm]
        {\fedadam} & $34.0$ \scriptsize{$\pm 0.23$} & $73.9$ \scriptsize{$\pm 0.89$} \\[0.05cm]
        \bottomrule
    \end{tabular}
    \endgroup
    \vspace{0.3cm}
    \caption{Train and test accuracy on CIFAR10 of the final model over three runs ($\pm$ denotes maximum deviation from the average).}
    \label{tab:traintest}
\end{table}

%% Experimental Results:
%%%%%%%%%%%%%%
% CIFAR10 Train:
%%% FedDC: 0.945703125, 0.942578125, 0.951953125 -> avg: 0.94674479166 +- 0.00520833333
%%% Daisy-Chaining: 0.950390625, 0.95390625, 0.917578125 -> avg: 0.940625 +- 0.023046875
%%% FedAvg(b=10): 0.9734375, 0.9765625, 0.97265625 -> avg: 0.97421875 +- 0.00234375
%%% FedAvg(b=1): 0.971875, 0.973828125, 0.97109375 -> avg: 0.972265625 +- 0.0015625
%%% FedProx: 0.95703125, 0.95703125, 0.96328125 -> avg: 0.95911458333 +- 0.00416666666
%%% FedAdagrad: 0.31484375, 0.31953125, 0.316796875 -> avg: 0.31705729166 +- 0.00247395833
%%% FedYogi: 0.733203125, 0.71953125, 0.719921875 -> avg: 0.72421875 +- 0.008984375
%%% FedAdam: 0.730078125, 0.747265625, 0.739453125 -> avg: 0.73893229166 +- 0.00885416666

% CIFAR10 Test:
%%% FedDC: 0.628605, 0.6286100000000001, 0.6289625 -> avg: 0.62872583333333336666666666666667 +- 0.0002366666666666334
%%% Daisy-Chaining: 0.5924775, 0.5837725, 0.57555 -> avg:  0,583933333 +- 0,008544167
%%% FedAvg(b=10): 0.49615, 0.4828200000000001, 0.48336000000000007 -> avg: 0.48744333333333339 +- 0,00870666666666661
%%% FedAvg(b=1): 0.562655, 0.5501275, 0.5609949999999999 -> avg: 0.5579258333333333 +- 0,0077983333333333
%%% FedProx: 0.5027925, 0.5108874999999999, 0.5187775 -> avg: 0.51081916666666663333333333333333 +- 0.00802666666666663333
%%% FedAdagrad: 0.21755, 0.2175125, 0.21761 -> avg: 0.2175575 +- 0.0000525
%%% FedYogi: 0.34000749999999996, 0.33154500000000003, 0.2702869565217392 -> avg: 0.3139464855 +- 0.04365952898
%%% FedAdam: 0.33785750000000003, 0.3410175, 0.34158999999999995 -> avg: 0.340155 +- 0.0022975

\subsubsection{Additional Results on SUSY}
%\paragraph{Additional Results on SUSY}
\label{app:radonsusy}
In Sec.~\ref{sec:theory} we compared \ourmethod to federated learning with the iterated Radon point. For completeness, we compare it to \fedavg as well, i.e.,~federated learning using averaging on the same SUSY binary classification dataset~\citep{baldi2014searching}. The results shown in Fig.~\ref{fig:fedDCvsRadonOnly1} are in line with the findings in Sec.~\ref{sec:theory}: \ourmethod with the iterated Radon point outperforms \fedavg both with the same amount of communication ($b=1$) and the same aggregation period ($b=50$). The results for $b=50$ show that \fedavg exhibits the same behavior on local training sets that indicates overfitting. Overall, \fedavg performs comparably to federated learning with the iterated Radon point. That is, \fedavg ($b=1$ and $b=50$) has an accuracy of $0.67 $, resp. $0.64$, compared to $0.68$, resp. $0.64$ for federated learning with the iterated Radon point.
\begin{figure*}[t]
 \centering
\begin{subfigure}[t]{0.34\textwidth}
 \includegraphics{figs/radonfeddc.pdf}
%   \ifpdf
%   \tikzsetnextfilename{radonfeddc}
%   \begin{tikzpicture}
%     \begin{axis}[
%      eda line,
%      xlabel = {rounds}, 
%      ylabel= {accuracy},
%      width = 4.8cm,
%      height = 3.5cm,
%      no markers,
%      xtick = {0, 100, ..., 500},
%      xticklabel style={rotate=0},
% 	 cycle list name = {mamba},
%      ytick={0.4,0.5,...,1.0},
%      xmin=0, xmax=500,
%      ymin=0.4, ymax=1.0,
%      x label style = {at={(axis description cs:0.5,-0.35)}, anchor=north, font=\scriptsize},
%      y label style = {at={(axis description cs:-0.25,0.5)}, anchor=south, font=\scriptsize},
% %     legend pos = south east
% legend columns=2, legend style={nodes={scale=1.0, transform shape}, at={(0.80,0.08)}, anchor=north, draw=none}
%     ]
% 	%\errorband[green(ryb), opacity=0.2] {expres/radon/feddc_trainaccsconf.txt}{0}{2}{3}
% 	\errorband[internationalorange, opacity=0.2] {expres/radon/feddc_testaccsconf.txt}{0}{2}{3}
%  	\addplot+[green(ryb), line width=0.8pt, opacity=0.9] table[x index = 0, y index = 1, header = true] {expres/radon/feddc_trainaccsconf.txt};
%  	\addplot+[internationalorange, line width=1.0pt] table[x index = 0, y index = 1, header = true] {expres/radon/feddc_testaccsconf.txt};
%  	\addlegendentry{train}
% 	\addlegendentry{test}
% 	\draw [dashed, color=black] (axis cs:0,0.77) -- (axis cs:500,0.77) node[pos=0.2, above] {\scriptsize centralized (test)};
%   \end{axis}
%   \end{tikzpicture}%
%   \fi
  \caption{\ourmethod with Radon point with $d=1$, $b=50$.}
\label{fig:app:radonFedDC}
\end{subfigure}
\hfill
\begin{subfigure}[t]{0.32\textwidth}
\includegraphics{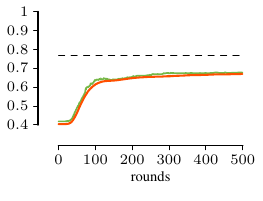}
%  \ifpdf
%   \tikzsetnextfilename{avgonly1}
%   \begin{tikzpicture}
%     \begin{axis}[
%      eda line,
%      xlabel = {rounds}, 
%      %ylabel= {accuracis},
%      width = 4.7cm,
%      height = 3.5cm,
%      no markers,
%      xtick = {0, 100, ..., 500},
%      xticklabel style={rotate=0},
% 	 cycle list name = {mamba},
%      ytick={0.4,0.5,...,1.0},
%      xmin=0, xmax=500,
%      ymin=0.4, ymax=1.0,
%      x label style = {at={(axis description cs:0.5,-0.35)}, anchor=north, font=\scriptsize},
%      y label style = {at={(axis description cs:-0.0,0.5)}, anchor=south, font=\scriptsize},
% %     legend pos = south east
% %legend columns=2, legend style={nodes={scale=1.0, transform shape}, at={(0.80,0.08)}, anchor=north}
%     ]
% 	%\errorband[green(ryb), opacity=0.2] {expres/radon/radononly_trainaccsconf.txt}{0}{2}{3}
% 	\errorband[internationalorange, opacity=0.2] {expres/radon/avg1_testaccsconf.txt}{0}{2}{3}
% 	%\errorband[internationalorange, opacity=0.2] {expres/radon/avg1_trainaccsconf.txt}{0}{2}{3}
%  	\addplot+[green(ryb), line width=0.8pt, opacity=0.9] table[x index = 0, y index = 1, header = true] {expres/radon/avg1_trainaccsconf.txt};
%  	\addplot+[internationalorange, line width=1.0pt] table[x index = 0, y index = 1, header = true] {expres/radon/avg1_testaccsconf.txt};
%  	%\addlegendentry{train}
% 	%\addlegendentry{test}
% 	\draw [dashed, color=black] (axis cs:0,0.77) -- (axis cs:500,0.77);
%   \end{axis}
%   \end{tikzpicture}%
%   \fi
  \caption{Federated learning with averaging with $b=1$.}
\label{fig:app:avgOnly1}
\end{subfigure}
\begin{subfigure}[t]{0.32\textwidth}
\includegraphics{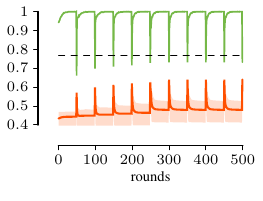}
%  \ifpdf
%   \tikzsetnextfilename{avgonly}
%   \begin{tikzpicture}
%     \begin{axis}[
%      eda line,
%      xlabel = {rounds}, 
%      %ylabel= {accuracis},
%      width = 4.7cm,
%      height = 3.5cm,
%      no markers,
%      xtick = {0, 100, ..., 500},
%      xticklabel style={rotate=0},
% 	 cycle list name = {mamba},
%      ytick={0.4,0.5,...,1.0},
%      xmin=0, xmax=500,
%      ymin=0.4, ymax=1.0,
%      x label style = {at={(axis description cs:0.5,-0.35)}, anchor=north, font=\scriptsize},
%      y label style = {at={(axis description cs:-0.0,0.5)}, anchor=south, font=\scriptsize},
% %     legend pos = south east
% %legend columns=2, legend style={nodes={scale=1.0, transform shape}, at={(0.80,0.08)}, anchor=north}
%     ]
% 	%\errorband[green(ryb), opacity=0.2] {expres/radon/radononly_trainaccsconf.txt}{0}{2}{3}
% 	\errorband[internationalorange, opacity=0.2] {expres/radon/avg50_testaccsconf.txt}{0}{2}{3}
% 	%\errorband[internationalorange, opacity=0.2] {expres/radon/avg1_trainaccsconf.txt}{0}{2}{3}
%  	\addplot+[green(ryb), line width=0.8pt, opacity=0.9] table[x index = 0, y index = 1, header = true] {expres/radon/avg50_trainaccsconf.txt};
%  	\addplot+[internationalorange, line width=1.0pt] table[x index = 0, y index = 1, header = true] {expres/radon/avg50_testaccsconf.txt};
%  	%\addlegendentry{train}
% 	%\addlegendentry{test}
% 	\draw [dashed, color=black] (axis cs:0,0.77) -- (axis cs:500,0.77);
%   \end{axis}
%   \end{tikzpicture}%
%   \fi
  \caption{Federated learning with averaging with $b=50$.}
\label{fig:avgOnly}
\end{subfigure}
\caption{\textit{Results on SUSY.} We visualize results in terms of train (green) and test error (orange) for \ourmethod with the iterated Radon point (a) and \fedavg with $b=1$ (b) as well as  \fedavg with $b=50$ (c). The network has 441 clients with 2 data points per client. The performance of a central model trained on all data is indicated by the dashed line.
}
\label{fig:fedDCvsRadonOnly1}
\end{figure*}
\tikzexternalenable

\subsubsection{Comparison with Distributed Mini-Batch SGD on CIFAR10}
%paragraph{Comparison with Centralized training and mini-batch SGD on CIFAR10}
\label{app:minibatchSGD}
%To ensure comparability, the centralized baseline in the main experiments is run with the same batch size and learning rate as the federated algorithms. However, as shown by~\citet{kamp2018efficient}, \fedavg(b=1) on $m$ clients with SGD using a mini-batch size $B$ and learning rate $\lambda$ is equivalent to centralized SGD with mini-batch size $mB$ and learning rate $\frac{\lambda}{m}$. Table~\ref{} shows the results for centralized learning equivalent to \fedavg(b=1), i.e., with $B=150\cdot64=9600$ and $\lambda=\frac{0.01}{150}=0.000067$. 

%mini-batch SGD with 150 clients, B=64 (simulated via running centralized with B=150*64), lr=0.01
%MB-SGD: 20.15, 19.00, 19.27 -> 19.47 +- 0.68

We compare to distributed mini-batch SGD, i.e., central updates where gradients are computed distributedly on CIFAR10. We use the same setup as for the other experiments, i.e., $m=150$ clients and a mini-batch size of $B=64$, so that the effective mini-batch size for each update is $mB=9600$, the optimal learning rate is $\lambda=0.01$. Here, mini-batch SGD achieves a test accuracy of $19.47 \pm 0.68$. Since a plausible explanation for the poor performance is the large mini-batch size, we compare it to the setting with $B=1$ to achieve the minimum effective mini-batch size of $B=150$. The results are substantially improved to an accuracy of $50.14 \pm 0.63$, underlining the negative effect of the large batch size in line with the theoretical analysis of \citet{shamir2014distributed}. Running it $64$ times the number of rounds that FedDC uses improves the accuracy just slightly to $54.3$. Thus, even with optimal $B=1$ and a 64-times slower convergence, mini-batch SGD is outperformed by both FedAvg and FedDC, since it cannot use the more favorable mini-batch size of $B=64$ on $m=150$ clients.

\subsubsection{Local Dataset Size}
\label{app:localdatasetsize}
%\paragraph{Local Dataset Size} 
In our experiments, we used dataset sizes common in the medical domain, e.g., for radiological images. To further investigate the impact of local dataset sizes on the performance of \ourmethod wrt. \fedavg, we evaluate the performance for local dataset sizes ranging from $2$ to $256$ (given the size of CIFAR10, 256 is the maximum without creating overlap between local datasets). The results in Fig.~\ref{fig:dataset_size} show that \ourmethod outperforms all baselines for smaller local datasets. Only for as much as $256$ examples \fedavg performs as good as \feddc.

These results further confirm that \ourmethod is capable of handling heterogeneous data: for $n<10$ the clients only see a subset of labels due to the size of their local datasets (with n=2, each client can at most observe two classes). We find this a more natural non-iid setting. These results indicate that the shuffle mechanism indeed mitigates data heterogeneity well. A further study of the impact of non-iid data, a comparison with personalized FL, and potential improvements to the shuffling scheme are interesting directions for future work
\begin{figure}[h]
\begin{subfigure}[t]{0.45\textwidth}
\centering
    \includegraphics[width=0.9\textwidth]{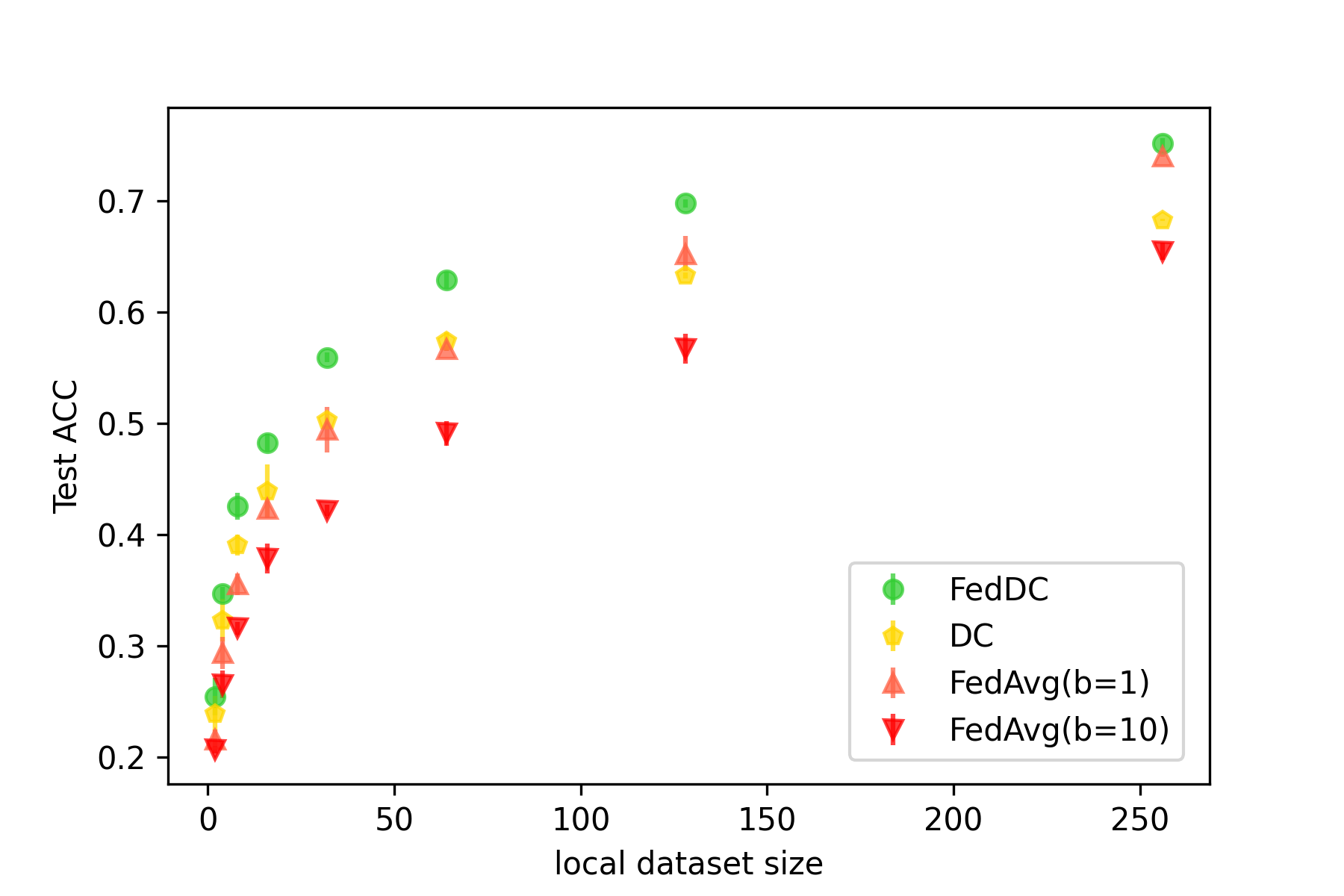}
\end{subfigure}
\begin{subfigure}[t]{0.45\textwidth}
    \includegraphics[width=0.9\textwidth]{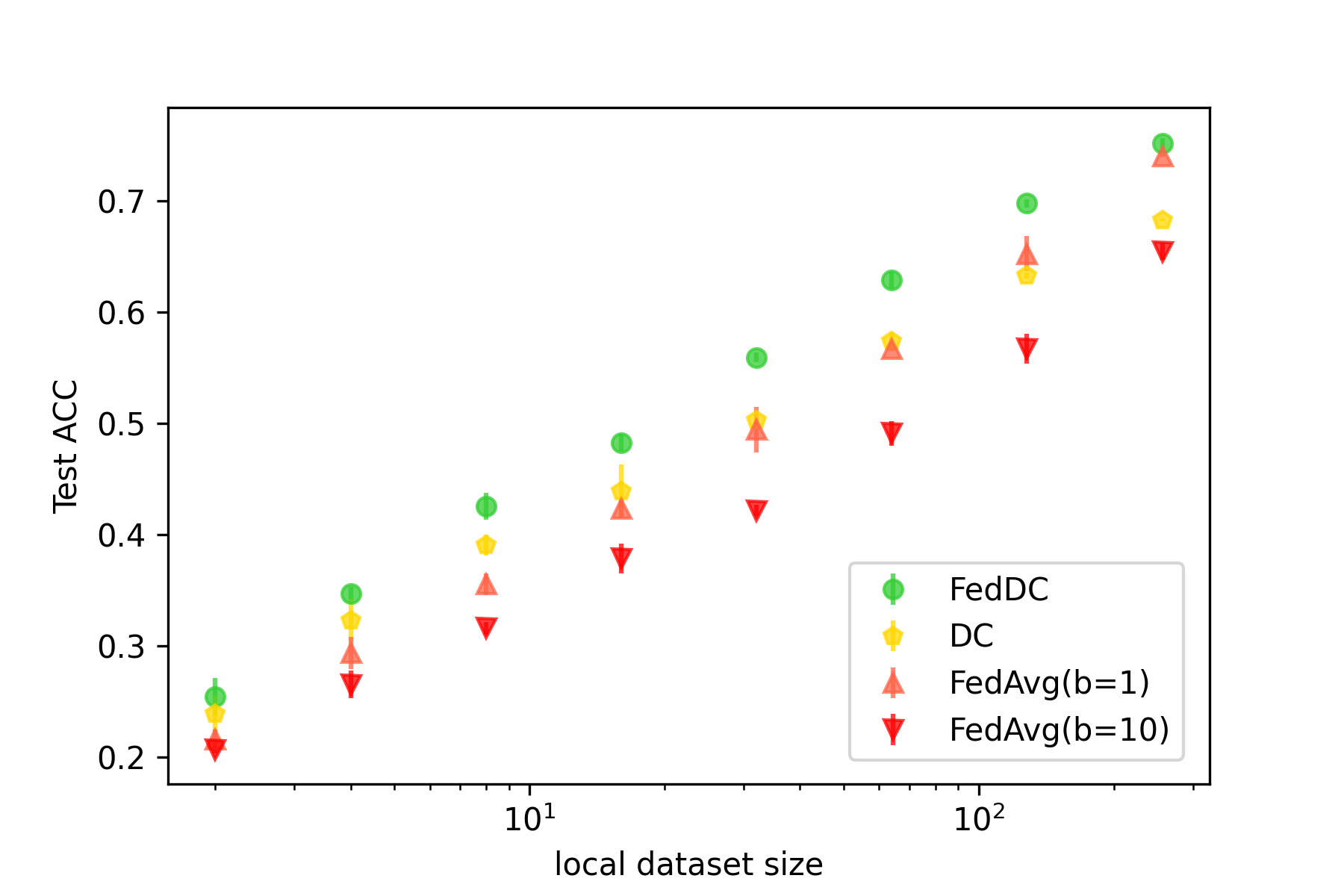}
\end{subfigure}
\caption{Test accuracy wrt. local dataset size on CIFAR10 with 150 clients ($n=2^i$ for $i\in\{1,\dots,8\}$) with linear (left) and logarithmic (right) x-axis.}
\label{fig:dataset_size}
\end{figure}

\subsubsection{Realistic Dataset Size Distribution}
%\paragraph{Realistic Dataset Size Distribution}
\label{app:cifarothers}
In medical applications, a common scenario is that some hospitals, e.g., university clinics, hold larger datasets, while small clinics, or local doctors' offices only hold very small datasets. To simulate such a scenario, we draw local dataset sizes for a dataset of size $n$ so that a fraction $c$ of the clients hold only a minimum number of samples $n_{min}$ (the local doctor's offices), and the other clients have an increasing local dataset size starting from $n_{min}$ until all data is distributed. That is, for clients $i=\left[1,\dots,m-\lfloor cm\rfloor\right]$ the dataset sizes are given by $n_{min} + ai$ with $$a=\frac{2(n-(\lfloor cm\rfloor)n_{min})}{\lfloor(1-c)m\rfloor(\lfloor(1-c)m\rfloor - 1)}$$.

We use the MRI brainscan dataset with $c=0.3$ and $n_{min}=2$. The results presented in Tab.~\ref{tab:imbaresults} show that \ourmethod performs well in that setting. \feddc($d=4, b=10$) outperforms all other methods with an accuracy of around $0.81$, is similar to \feddc on equally distributed data ($0.79$), and is even close to the centralized gold-standard ($0.82$). 
\begin{table}[ht]
    \footnotesize
    \setlength\tabcolsep{5pt}
    \centering
    \begin{tabular}{lccc}
        \toprule
        %\diagbox[font=\scriptsize,trim=lr, linecolor=white]{method}{dataset} & \textbf{CIFAR10} & \textbf{MRI} & \textbf{Pneumonia}\\
         & \textbf{MRI} \\
        \midrule
        \ourmethod (d=1, b=10) & $\mathbf{74.7}$ \scriptsize{$\mathbf{\pm 0.61}$}\\[0.05cm]
        \ourmethod (d=2, b=10) & $\mathbf{74.4}$ \scriptsize{$\mathbf{\pm 1.15}$}\\[0.05cm]
        \ourmethod (d=4, b=10) & $\mathbf{80.6}$ \scriptsize{$\mathbf{\pm 0.66}$}\\[0.05cm]
        \ourmethod (d=5, b=10) & $\mathbf{77.8}$ \scriptsize{$\mathbf{\pm 1.42}$}\\[0.05cm]
        DC (baseline) & $53.9$ \scriptsize{$\pm 0.25$}\\[0.05cm]
        \fedavg(b=1) & $70.1$ \scriptsize{$\pm 2.59$}\\[0.05cm]
        \fedavg(b=10) & $75.3$ \scriptsize{$\pm 2.37$}\\[0.05cm]
        \midrule
        {central} & $79.9$ \scriptsize{$\pm 6.23$}\\[0.05cm]
        \bottomrule
    \end{tabular}
    % \begin{tabular}{lccc}
    %     \toprule
    %     %\diagbox[font=\scriptsize,trim=lr, linecolor=white]{method}{dataset} & \textbf{CIFAR10} & \textbf{MRI} & \textbf{Pneumonia}\\
    %      & \textbf{CIFAR10} & \textbf{MRI} & \textbf{Pneumonia}\\
    %     \midrule
    %     \ourmethod (d=1) & $\mathbf{-}$ \scriptsize{$\mathbf{\pm -}$} & $\mathbf{74.7}$ \scriptsize{$\mathbf{\pm 0.61}$} & $\mathbf{-}$ \scriptsize{$\mathbf{\pm -}$}\\[0.05cm]
    %     \ourmethod (d=2) & $\mathbf{-}$ \scriptsize{$\mathbf{\pm -}$} & $\mathbf{74.4}$ \scriptsize{$\mathbf{\pm 1.15}$} & $\mathbf{-}$ \scriptsize{$\mathbf{\pm -}$}\\[0.05cm]
    %     \ourmethod (d=4) & $\mathbf{-}$ \scriptsize{$\mathbf{\pm -}$} & $\mathbf{80.6}$ \scriptsize{$\mathbf{\pm 0.66}$} & $\mathbf{-}$ \scriptsize{$\mathbf{\pm -}$}\\[0.05cm]
    %     \ourmethod (d=5) & $\mathbf{-}$ \scriptsize{$\mathbf{\pm -}$} & $\mathbf{77.8}$ \scriptsize{$\mathbf{\pm 1.42}$} & $\mathbf{-}$ \scriptsize{$\mathbf{\pm -}$}\\[0.05cm]
    %     DC (baseline) & $-$ \scriptsize{$\pm -$} & $53.9$ \scriptsize{$\pm 0.25$} & $-$ \scriptsize{$\pm -$}\\[0.05cm]
    %     \fedavg(b=1) & $-$ \scriptsize{$\pm -$} & $70.1$ \scriptsize{$\pm 2.59$} & $-$ \scriptsize{$\pm -$}\\[0.05cm]
    %     \fedavg(b=10) & $-$ \scriptsize{$\pm -$} & $75.3$ \scriptsize{$\pm 2.37$} & $-$ \scriptsize{$\pm -$}\\[0.05cm]
    %     \midrule
    %     {central} & $-$ \scriptsize{$\pm -$} & $79.9$ \scriptsize{$\pm 6.23$} & $-$ \scriptsize{$\pm -$} \\[0.05cm]
    %     \bottomrule
    % \end{tabular}
    %\endgroup
    \vspace{0.2cm}
    \caption{Results for realistic dataset size distribution on MRI, reported is the average test accuracy of the final model over three runs ($\pm$ denotes maximum deviation from the average).}
    \label{tab:imbaresults}
\end{table}
%%%%%%%%%%%%%%
% MRI:
%%% FedDC(d=1) : 0.7415094339622643, 0.7457547169811322, 0.7528301886792453 -> avg: 0.746698113  +- 0.006132075
%%% FedDC(d=2) : 0.738679245	0.755660377	0.738207547 -> avg: 0.74418239  +- 0.011477987
%%% FedDC(d=4) : 0.800365186	0.805188679	0.812735849 -> avg: 0.806096571  +- 0.006639278
%%% FedDC(d=5) : 0.7924528301886793, 0.7674528301886794, 0.7750000000000001 -> avg: 0.778301887 +- 0.014150943
%%% Daisy-Chaining: 0.537735849	0.541509434	0.537735849 -> avg: 0.538993711 +- 0.002515723
%%% FedAvg(b=10): 0.7485849056603774, 0.7330188679245283, 0.7764150943396226 -> avg: 0.752672956 +- 0.023742138
%%% FedAvg(b=1): 0.7273584905660377, 0.6811320754716981, 0.695754716981132 -> avg: 0.701415094 +- 0.025943396

\subsubsection{Communication efficiency of \ourmethod}
%\paragraph{Communication efficiency of \ourmethod}
\label{app:commeff}
Although communication is not a concern in cross-silo applications, such as healthcare, the communication efficiency of \feddc is important in classical federated learning applications. We therefore compare \feddc with varying amounts of communication to \fedavg on CIFAR10 in Tab.~\ref{tab:commeff}. The results show that \feddc(d=2) outperforms \fedavg(b=1), thus outperforming just using half the amount of communication, and \feddc(d=5) performs similar to \fedavg(b=1), thus outperforming using five times less communication. \ourmethod with $d=10$ and $b=10$ significantly outperforms \fedavg(b=10), which corresponds to the same amount of communication in this low-sample setting.
\begin{table}[ht]
    \footnotesize
    \setlength\tabcolsep{5pt}
    \centering
    \begin{tabular}{lccc}
        \toprule
         & \textbf{CIFAR10} \\
        \midrule
        \ourmethod (d=1,b=10) & $\mathbf{62.9}$ \scriptsize{$\mathbf{\pm 0.02}$}\\[0.05cm]
        \ourmethod (d=2,b=10) & $\mathbf{60.8}$ \scriptsize{$\mathbf{\pm 0.65}$}\\[0.05cm]
        \ourmethod (d=5,b=10) & $\mathbf{55.4}$ \scriptsize{$\mathbf{\pm 0.11}$}\\[0.05cm]
        \ourmethod (d=10,b=20) & $\mathbf{53.8}$ \scriptsize{$\mathbf{\pm 0.47}$}\\[0.05cm]
        \fedavg(b=1) & $55.8$ \scriptsize{$\pm 0.78$}\\[0.05cm]
        \fedavg(b=10) & $48.7$ \scriptsize{$\pm 0.87$}\\[0.05cm]
        \midrule
        {central} & $65.1$ \scriptsize{$\pm 1.44$}\\[0.05cm]
        \bottomrule
    \end{tabular}
    \vspace{0.2cm}
    \caption{Communication efficiency of \ourmethod compared to \fedavg., where \feddc(d=1,b=10) and \fedavg(b=1), respectively \feddc(d=10,b=20) and \fedavg(b=10) have the same amount of communication.}
    \label{tab:commeff}
\end{table}

\subsubsection{Client Subsampling}
\label{app:clientsubsampling}
A widely used technique to improve communication-efficiency in federated learning is to subsample clients in each communication round. For example, instead of averaging the models of all clients in vanilla FedAvg, only a subset of clients sends their models and receives the average of this subset. By randomly sampling this subset, eventually all clients will participate in the process. The fraction $C\in (0,1]$ of clients sampled is a hyperparameter. In cross-SILO applications, such as healthcare, communication-efficiency is not relevant and client participation is assumed to be $C=1.0$. Client subsampling can naturally be used in \ourmethod by sampling clients both in daisy-chaining and aggregation rounds. We conducted an experiment on CIFAR10 where we compare \ourmethod using subsampling to \fedavg$(b=10)$. The results in Table~\ref{tab:clientsubsampling} show that \ourmethod indeed works well with client subsampling and outperforms \fedavg with $C=0.2$, similar to full client participation (C=1.0). However, due to the restricted flow of information, the training process is slowed. By prolonging training from $10000$ to $30000$ rounds, \ourmethod  with $C=0.2$ reaches virtually the same performance as in full client participation, but with higher variance. The same holds true for \fedavg.

\begin{table}[ht]
    \footnotesize
    \setlength\tabcolsep{5pt}
    \centering
    \begin{tabular}{llll}
        \toprule
        & \multicolumn{2}{c}{$T=10\, 000$} & $T=30\, 000$\\
         & $C=1.0$ & $C=0.2$ & $C=0.2$ \\
        \midrule
        \ourmethod (d=1,b=10) & $\mathbf{62.9}$ \scriptsize{$\mathbf{\pm 0.02}$} &  $\mathbf{53.2}$ \scriptsize{$\mathbf{\pm 3.42}$} &  $\mathbf{61.0}$ \scriptsize{$\mathbf{\pm 1.07}$}\\[0.05cm]
        \fedavg(b=10) & $48.7$ \scriptsize{$\pm 0.87$}& $45.9$ \scriptsize{$\pm 6.94$}& $49.3$ \scriptsize{$\pm 5.23$}\\[0.05cm]
        \bottomrule
    \end{tabular}
    \vspace{0.2cm}
    \caption{Client subsampling of \ourmethod compared to \fedavg on CIFAR10.}
    \label{tab:clientsubsampling}
\end{table}

\subsubsection{Additional Results on MNIST}
%\paragraph{Additional Results on MNIST}
\label{app:mnist}
In order to further demonstrate the efficiency of \ourmethod on clients that achieve state-of-the-art performance we perform experiments on the MNIST~\citep{lecun1998gradient} dataset. We use a CNN with two convolutional layers with max-pooling, followed by two linear layers with $1024$, resp. $100$ neurons. Centralized training on all $60\,000$ training samples of MNIST achieves a test accuracy of $0.994$ which is similar to the state-of-the-art. % (comparable to the best accuracies reached on MNIST so far with $99.84$ for a capsule network and $99.91$ for a heterogeneous ensemble of CNNs).
The results for $m=50$ clients in Tab.~\ref{tab:mnist} show that \ourmethod outperforms \fedavg both with the same amount of communication, i.e., \fedavg($b=1$) and \fedavg($b=10$). In line with the results on CIFAR10 (cf. Fig.~\ref{fig:dataset_size}), the advantage of \ourmethod shrinks with increasing local dataset size. Using $n=1200$, i.e., the full training set distributed over $m=50$ clients, results in virtually the same performance of \ourmethod and \fedavg, both reaching a test accuracy of around $0.96$. 
\begin{table}[ht]
    \footnotesize
    \setlength\tabcolsep{5pt}
    \centering
    \begin{tabular}{lccc}
        \toprule
         & \textbf{\ourmethod(d=1,b=10)} & \textbf{\fedavg(b=1)} & \textbf{\fedavg(b=10)}\\
        \midrule
        $n=8$ & $87.6$ & $84.4$ & $84.9$ \\
        % $n=10$ & & &\\
        % $n=50$ & & &\\
        % $n=100$ & & &\\
        $n=1200$ & $96.7$ & $96.3$ & $96.5$\\
        \bottomrule
    \end{tabular}
    \vspace{0.2cm}
    \caption{Performance of \ourmethod and \fedavg on MNIST for varying local dataset sizes $n$.}
    \label{tab:mnist}
\end{table}

\subsection{Proof of convergence}

\begin{corollary*}
Let the empirical risks $\mathcal{E}_{emp}^i(h)=\sum_{(x,y)\in D^i}\ell(h_i(x),y)$ at each client be $L$-smooth with $\sigma^2$-bounded gradient variance and $G^2$-bounded second moments, then \ourmethod with averaging and SGD as learning algorithm has a convergence rate of $\mathcal{O}(1/\sqrt{m T})$, where $T\in\NN$ is the number of local updates.
\end{corollary*}
\begin{proof}
We assume that each client $i\in [m]$ uses SGD as learning algorithm. In each iteration $t\in [T]$, a client $i$ computes the gradient $\mathbf{G}^i_t=\nabla\ell(h^i_t(x),y)$ with $x,y\in D^i$ drawn randomly and updates the local model $h^i_{t+1} = h^i_t - \gamma G^i_t$, where $\gamma > 0$ denotes the learning rate.
\ourmethod with daisy-chaining period $d$ and averaging period $b$, run for $T$ local iterations, computes $T/b$ local gradients at each of the $m$ clients before averaging. Each local gradient is computed on an iid sample from $\Dcal$, independent of whether local models are permuted. Therefore, \ourmethod with averaging and SGD is equivalent to parallel restarted SGD (PR-SGD)~\citep{yu2019parallel} with $b/d$ times larger local datasets. \citet{yu2019parallel} analyze the convergence of PR-SGD with respect to the average $\overline{h}_t$ of local models in round $t$. Since $\mathcal{E}_{emp}^i(h)=\sum_{(x,y)\in D^i}\ell(h_i(x),y)$ at each client be $L$-smooth with $\sigma^2$-bounded gradient variance and $G^2$-bounded second moments, Theorem~1 in~\citet{yu2019parallel} is applicable. It then follows from Corollary~1~\citep{yu2019parallel} that for $\gamma = \sqrt{m}/(L\sqrt{T})$ and $b\leq T^{\frac{1}{4}}/m^{\frac{3}{4}}$%and local dataset size $n\in\NN$
it holds that
\begin{equation*}
    \begin{split}
        \frac{1}{T}\Exp{}{\sum_{t=1}^{T}\Exp{(x,y)\sim\Dcal}{\mathcal{E}_{emp}^i\left(\overline{h}_t(x),y\right)}} \leq & \frac{2L}{\sqrt{mT}}\left(\Exp{(x,y)\sim\Dcal}{\mathcal{E}_{emp}^i\left(\overline{h}_0(x),y\right)}\right.\\
        &\left.-\Exp{(x,y)\sim\Dcal}{\mathcal{E}_{emp}^i\left(\overline{h}^*(x),y\right)}\right)\\
        &+\frac{1}{\sqrt{mT}}\left(4G^2+\sigma^2\right)\in\mathcal{O}\left(\frac{1}{\sqrt{m T}}\right)\enspace .
    \end{split}
\end{equation*}
Here, the first expectation is over the draw of local datasets and $\overline{h}^*$ is given by
\[
\overline{h}^* = \argmin_{h\in\mathcal{H}}\Exp{(x,y)\sim\Dcal}{\mathcal{E}_{emp}(\overline{h}(x),y)}.
\]
Thus, \ourmethod with averaging and SGD converges in $\mathcal{O}(1/\sqrt{m T})$.
\end{proof}

% \subsection{Lemma~\ref{lm:minNumberRoundsDC}}
% \label{app:proof:lemma3}

\subsection{Proof of model quality improvement by \ourmethod}
\label{app:prooffeddcworks}

In order to proof Prop.~\ref{prop:fedDCworks}, we first need the following Lemma.
\begin{lemma}
Given $\delta\in (0,1]$, $m\in\NN$ clients, and $k\in [m]$, if Algorithm~\ref{alg:feddc} with daisy chaining period $d\in\NN$ is run for $T\in\NN$ rounds with
\[
T\geq d\frac{m}{\rho^\frac1m}(H_m-H_{m-k})
\]
where $H_m$ is the $m$-th harmonic number, then each local model has seen at least $k$ distinct datasets with probability $1-\rho$.
\label{lm:minNumberRoundsDC}
\end{lemma}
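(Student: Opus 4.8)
The plan is to reduce the claim to a coupon-collector estimate for a single model and then take a union bound over the $m$ models. First I would discard the aggregation rounds — they only propagate information, so a guarantee obtained from the daisy-chaining rounds alone is conservative — and note that of the $T$ rounds exactly $N := \lfloor T/d \rfloor$ are daisy-chaining rounds, each applying a fresh uniformly random permutation $\pi$. Tracking one fixed model: if it currently sits at client $c$, it next sits at $\pi(c)$, which is uniform on $[m]$, and since the permutations of different rounds are independent, the sequence $c_1, c_2, \dots, c_N$ of clients hosting this model is i.i.d.\ uniform on $[m]$ — a genuine coupon-collector process, since $\pi(c) = c$ is allowed. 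The coupling that $\pi$ induces between different models is irrelevant, because we only ever make statements about one model before the union bound. Hence, for any fixed client $j$, the model misses $j$ throughout the $N$ rounds with probability $(\tfrac{m-1}{m})^N$, and for any fixed set of $m-k+1$ clients it misses all of them with probability $(\tfrac{k-1}{m})^N \le (\tfrac{m-1}{m})^{N(m-k+1)}$, the last step being Bernoulli's inequality.

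Next I would bound, for a single model, the probability of the bad event ``has seen fewer than $k$ distinct datasets,'' which is exactly the event that it missed at least $m-k+1$ of the $m$ clients. Two routes: (i) union-bound over the at most $\binom{m-1}{k-2}$ possible $(k-1)$-element visited sets, using $(\tfrac{k-1}{m})^N$ per set; or (ii) let $Z$ be the number of never-visited clients, note $\mathbb{E}[Z] \le m(\tfrac{m-1}{m})^N$ by linearity of expectation, and apply Markov's inequality, giving bad-probability at most $m(\tfrac{m-1}{m})^N/(m-k+1)$. Taking a union bound over the $m$ models, demanding the resulting quantity be at most $\delta$, taking logarithms, and dividing by $\ln(\tfrac{m-1}{m}) < 0$ (which reverses the inequality) produces a lower bound on $N$; substituting $N = \lfloor T/d \rfloor$ yields the stated condition on $T$, and the complementary event — every model has seen at least $k$ distinct datasets — then holds with probability at least $1-\delta$.

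The crux is the second step: converting ``fewer than $k$ distinct'' into a tail bound whose clean-up leaves essentially only $\ln\delta$ in the numerator. One has to (a) handle the dependence among a single model's per-client miss events — they are negatively associated, precisely the content of the Bernoulli-inequality bound above, which is what licenses replacing a joint miss probability by a product — and (b) absorb the combinatorial prefactor (the $\binom{m-1}{k-2}$ from route (i), or the $m/(m-k+1)$ together with the union-over-models factor $m$ from route (ii)) into the exponent $N(m-k+1)$, equivalently carrying it through the logarithm to land on the denominator factor $(m-k+1)m$. Keeping track of the floor in $N = \lfloor T/d \rfloor$ is the only other fiddly point; the i.i.d.\ trajectory structure, the per-client miss probability, the union bound over models, and the final logarithmic rearrangement are all routine.
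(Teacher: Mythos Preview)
Your reduction to a coupon-collector process for a single model, together with the union bound over models, is sound and in fact more careful than the paper's own argument. The paper proceeds more directly: it computes the probability that a model misses a \emph{fixed} collection of $m-k+1$ datasets as $\left(\tfrac{m-1}{m}\right)^{\tau(m-k+1)}$ (implicitly using the negative-association inequality you make explicit via Bernoulli), then simply \emph{identifies} this with the probability of seeing fewer than $k$ datasets---i.e., it omits your union over subsets in route~(i) and your Markov step in route~(ii)---and handles the $m$ models by raising the resulting quantity to the $m$-th power rather than by a union bound. Solving $\left(\tfrac{m-1}{m}\right)^{\tau(m-k+1)m}\le\delta$ for $\tau$ and setting $T=\tau d$ is exactly what produces the factor $(m-k+1)m$ in the denominator of the stated bound.

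This is precisely where your proposal has a genuine gap: the step ``absorb the combinatorial prefactor into the exponent $N(m-k+1)$ \dots\ to land on the denominator factor $(m-k+1)m$'' cannot be carried out. If you execute route~(ii) honestly you get $\tfrac{m^2}{m-k+1}\left(\tfrac{m-1}{m}\right)^{N}\le\delta$, hence
\[
N\;\ge\;\frac{\ln\delta-\ln\!\bigl(m^2/(m-k+1)\bigr)}{\ln\!\left(\tfrac{m-1}{m}\right)},
\]
so the prefactor contributes an additive $O(\ln m)$ correction to the numerator, not a multiplicative $1/\bigl((m-k+1)m\bigr)$; route~(i) behaves the same way. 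The $(m-k+1)m$ in the paper's denominator does not come from any prefactor absorption but from the two shortcuts above (no union over the choice of missed subset, and multiplication instead of union over models). A fully rigorous version of your argument therefore proves a correct but strictly weaker inequality than the one stated, and you should not expect to recover the paper's exact formula from it.
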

%The proof is provided in App.~\ref{app:proof:lemma3}.
%
Note that $H_m\approx \log m + \gamma +\frac{1}{2} + \mathcal{O}\left(\frac{1}{m}\right)$ where $\gamma \approx 0.5772156649$ denotes the Euler-Mascheroni-constant.

% \begin{lemma*}
% Given $\rho\in (0,1]$, $m\in\NN$ clients, and $k\in [m]$, if Algorithm~\ref{alg:feddc} with daisy chaining period $d\in\NN$ is run for $T\in\NN$ rounds with
% \[
% T\geq d\frac{m}{\rho^\frac1m}(H_m-H_{m-k})
% \]
% where $H_k$ is the $k$-th harmonic number, then each local model has seen at least $k$ distinct datasets with probability $1-\rho$.
% \end{lemma*}
\begin{proof}
For a single local model, it follows from the coupon collector problem~\citep{neal2008generalised} that the expected number of daisy-chaining rounds $R$ required to see at least $k$ out of $m$ clients is at least $m(H_m-H_{m-k})$, where 
\[
H_m=\sum_{i=1}^m \frac{1}{i}
\]
is the $m$-th harmonic number. To see that, consider that for the first distinct client the chance to pick it is $\frac{m}{m-1}$, for the second $\frac{m}{m-2}$ and for the $k$-th it is $\frac{m}{m-k+1}$, which sums up to $m(H_m-H_{m-k})$.

Applying the Markov inequality yields 
\[
P\left(R\geq \frac{1}{\rho}m(H_m-H_{m-k})\right)\leq \rho\enspace .
\]
The probability for all local models to have seen at least $k$ clients then is at most $\rho^m$. Thus, if we perform at least 
\[
R\geq \frac{m}{\rho^\frac1m}m(H_m-H_{m-k})
\]
daisy-chaining rounds, then the probability that each local model has not seen at least $k$ distinct datasets is smaller than $\rho$. The result follows from the fact that the number of daisy-chaining rules is $R=T/d$.
\end{proof}
Note that $H_k$ can be approximated as
\[
H_m\approx \log m + \gamma +\frac{1}{2} + \mathcal{O}\left(\frac{1}{m}\right)
\]
where $\gamma=\lim_{m\rightarrow\infty}(H_m - \ln m) \approx 0.5772156649$ denotes the Euler-Mascheroni-constant. From this it follows that 
\[
H_m-H_{m-k}\approx \ln\frac{m}{m-k} + \mathcal{O}\left(\frac{1}{m}-\frac{1}{m-k}\right)
\]
With this, we can now proof Prop.~\ref{prop:fedDCworks} that we restate here for convenience.

\begin{proposition*}
Let $\modelspace$ be a model space with Radon number $r\in\NN$, $\risk$ a convex risk, and $\algo$ a learning algorithm with sample size $\samplesize_0(\epsilon,\delta)$. Given $\epsilon>0$, $\delta\in (0,r^{-1})$ and any $h\in\NN$, and local datasets $D_1,\dots,D_m$ of size $n\in\NN$ with $m\geq r^h$, then Alg.~\ref{alg:feddc} using the Radon point with aggr. period
\begin{equation*}
b\geq  d\frac{m}{\delta^\frac{1}{2m}}\left(H_m-H_{m-\left\lceil n^{-1}\samplesize_0\left(\epsilon,\sqrt{\delta}\right)\right\rceil}\right)
%b\geq  d\frac{\left\lceil n^{-1}\samplesize_0\left(\epsilon,\sqrt{\delta}\right)\right\rceil}{\delta^\frac{1}{2m}}H_{\left\lceil n^{-1}\samplesize_0\left(\epsilon,\sqrt{\delta}\right)\right\rceil}
\end{equation*}
improves model quality in terms of $(\epsilon,\delta)$-guarantees.
\end{proposition*}
\begin{proof}
For 
\[
b\geq  d\frac{m}{\delta^\frac{1}{2m}}\left(H_m-H_{m-\left\lceil n^{-1}\samplesize_0\left(\epsilon,\sqrt{\delta}\right)\right\rceil}\right)
\]
it follows from Lemma~\ref{lm:minNumberRoundsDC} with $k=\left\lceil n^{-1}\samplesize_0\left(\epsilon,\sqrt{\delta}\right)\right\rceil$ that with probability $1-\sqrt{\delta}$ all local models are trained on at least $kn=\samplesize_0\left(\epsilon,\sqrt{\delta}\right)$ samples. Thus an $(\epsilon,\sqrt{\delta})$-guarantee holds for each model with probability $1-\sqrt{\delta}$. It follows from Eq.~\ref{eq:radonPointImprovement} that the probability that the risk is higher than $\epsilon$ is 
\[
\prob{\risk(\radonpoint_h)>\epsilon}<\left(r\sqrt{\delta}\sqrt{\delta}\right)^{2^h}=\left(r\delta\right)^{2^h}\enspace .
\]
The result follows from $\delta<r^{-1}$ and Eq.~\eqref{eq:radonPointImprovement}.
\end{proof}

\subsubsection{Numerical Analysis of Proposition~\ref{prop:fedDCworks}}
\label{app:numericalanalysisProp5}
The lower bound on the aggregation period
\[
b\geq  d\frac{m}{\delta^\frac{1}{2m}}\left(H_m-H_{m-\left\lceil n^{-1}\samplesize_0\left(\epsilon,\sqrt{\delta}\right)\right\rceil}\right)
\]
grows linearly with the daisy-chaining period and  a factor depending on the number of clients $m$, the error probability $\delta$, and the required number of hops $\left\lceil n^{-1}\samplesize_0\left(\epsilon,\sqrt{\delta}\right)\right\rceil$. Applying the bound to the experiment using the Radon point on SUSY in Sec.~\ref{sec:theory} with $m=441$ clients, daisy-chaining period $d=1$, and local dataset size of $n=2$ for local learners achieving an $(\epsilon=0.05,\delta=0.01)$-guarantee requires $b\geq 49.9$ to improve model quality according to Prop.~\ref{prop:fedDCworks}. For $b=50$ like in our experiments, Prop.~\ref{prop:fedDCworks} thus predicts that model quality is improved, under the assumption that 
\[
\samplesize_0(\epsilon,\delta) = \frac{1}{\epsilon}\log\frac{1}{\delta}\enspace .
\]
Even though, the experiments on CIFAR10 in Section~\ref{sec:experiments} are non-convex, and thus Prop.~\ref{prop:fedDCworks} does not apply, we can still evaluate the predicted required aggregation period: with $m=150$ clients, daisy-chaining period $d=1$, local dataset size of $n=64$, and local learners achieving an $(\epsilon=0.01,\delta=0.01)$-guarantee requires $b\geq 8.81$. 

We now analyze the scaling behavior with the error probability $\delta$ for various local dataset sizes in Fig.~\ref{fig:aggPeriod_deltaN}. The lower the error probability, the larger the required aggregation period $b$, in particular for small local datasets. If local datasets are sufficiently large, the aggregation period can be chosen very small. In Fig.~\ref{fig:aggPeriod_mN} we investigate the required aggregation period $b$ for local learners achieving an $(\epsilon=0.01,\delta=0.01)$-guarantee in relation to the local dataset size $n$. Indeed, the smaller the local dataset size, the larger the required aggregation period. We also see that for smaller numbers of clients, more aggregation rounds are required, since the chance of a model visiting the same client multiple times is larger.

\begin{figure}[ht]
\centering
\begin{subfigure}[t]{0.48\textwidth}
\includegraphics{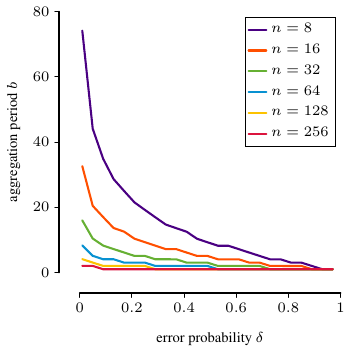}
\caption{Aggregation period $b$ for error probability $\delta$ for varying local dataset sizes with $m=150$ clients and $\epsilon=0.01$.}
\label{fig:aggPeriod_deltaN}
\end{subfigure}
\hfill
\begin{subfigure}[t]{0.48\textwidth}
\includegraphics{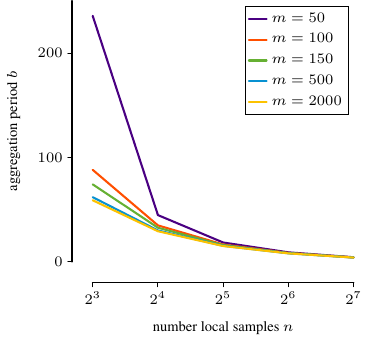}
\caption{Aggregation period $b$ for local dataset size $n$ for varying numbers of clients $m$, with $\epsilon=0.01$ and $\delta=0.01$.}
\label{fig:aggPeriod_mN}
\end{subfigure}
\end{figure}
\tikzexternalenable

\subsection{Note on Communication Complexity}
In our analysis of the communication complexity, we assume the total amount of communication to be linear in the number of communication rounds. For some communication systems the aggregated model can be broadcasted to individual clients which is not possible in daisy-chaining rounds, reducing the communication complexity in FedAvg. In most scenarios, fiber or GSM networks are used where each model has to be sent individually, so there is no substantial difference between broadcasting a model to all clients and sending an individual model to each client. Therefore, also in this case the amount of communication rounds determines the overall amount of communication.

\subsection{Extended Discussion on Privacy}
\label{app:attacks}
Federated learning only exchanges model parameters, and no local data.
It is, however, possible to infer upon local data given the model parameters, model updates or gradients~\citep{ma2020safeguarding}. In classical federated learning there are two types of attacks that would allow such inference: (i) an attacker intercepting the communication of a client with the server, obtaining models and model updates to infer upon the clients data, and (ii) a malicious server obtaining models to infer upon the data of each client. A malicious client cannot learn about a specific other client's data, since it only obtains the average of all local models. In federated daisy-chaining there is a third possible attack: (iii) a malicious client obtaining model updates from another client to infer upon its data. 

In the following, we discuss potential defenses against these three types of attacks in more detail. 
Note that we limit the discussion on attacks that aim at inferring upon local data, thus breaching data privacy. 
Poisoning~\citep{bhagoji2019analyzing}  and backdoor~\citep{sun2019can} attacks are an additional threat in federated learning, but are of less importance for our main setting in healthcare: there is no obvious incentive for a hospital to poison a prediction. It is possible that \ourmethod presents a novel risk surface for those attacks, but such attack strategies are non-obvious. Robust aggregation, such as the Radon point, are suitable defenses against such attacks~\citep{liu_threats_2022}. Moreover, the standard mechanisms that guarantee differential privacy also defend against backdoor and poising attacks~\citep{sun2019can}.
%For discussion of attacks that aim to poison the learning process~\cite{bhagoji2019analyzing} or create backdoors~\cite{sun2019can} for adversarial examples in standard federated learning, we refer the reader to~\citet{lyu2020threats}.

A general and wide-spread approach to tackle all three possible attack types is to add noise to the model parameters before sending. Using appropriate clipping and noise, this guarantees $\epsilon,\delta$-differential privacy for local data~\citep{wei2020federated} at the cost of a slight-to-moderate loss in model quality. We empirically demonstrated that \ourmethod performs well under such noise in Sec.~\ref{sec:privacy}.

Another approach to tackle an attack on communication (i) is to use encrypted communication. One can furthermore protect against a malicious server (ii) by using homomorphic encryption that allows the server to average models without decrypting them~\citep{zhang2020batchcrypt}. This, however, only works for particular aggregation operators and does not allow to perform daisy-chaining. Secure daisy-chaining in the presence of a malicious server (ii) can, however, be performed using asymmetric encryption. Assume each client creates a public-private key pair and shares the public key with the server. To avoid the malicious server to send clients its own public key and act as a man in the middle, public keys have to be announced (e.g., by broadcast). While this allows sending clients to identify the recipient of their model, no receiving client can identify the sender. Thus, inference on the origin of a model remains impossible. For a daisy-chaining round the server sends the public key of the receiving client to the sending client, the sending client checks the validity of the key and sends an encrypted model to the server which forwards it to the receiving client. Since only the receiving client can decrypt the model, the communication is secure. 
%%TODO: define attack model: how malicious can the coordinator be?

In standard federated learning, a malicious client cannot infer specifically upon the data of another client from model updates, since it only receives the aggregate of all local models. In federated daisy-chaining, it receives the model from a random, unknown client in each daisy-chaining round. Now, the malicious client can infer upon the membership of a particular data point in the local dataset of the client the model originated from, i.e., through a membership inference attack~\citep{shokri2017membership}. Similarly, the malicious client can infer upon the presence of data points with certain attributes in the dataset~\citep{ateniese2015hacking}. 
The malicious client, however, does not know the client the model was trained on, i.e., it does not know the origin of the dataset. 
Using a random scheduling of daisy-chaining and aggregation rounds at the server, the malicious client cannot even distinguish between a model from another client or the average of all models. Nonetheless, daisy-chaining opens up new potential attack vectors (e.g., clustering received models to potentially determine their origins). These potential attack vectors can be tackled in the same way as in standard federated learning, i.e., by adding noise to model parameters as discussed above, since ``[d]ifferentially private models are, by construction, secure against membership inference attacks''~\citep{shokri2017membership}.

\end{document}